\documentclass{article}
\makeatletter
\providecommand{\@trackname}{}
\makeatother

\PassOptionsToPackage{numbers}{natbib}

\usepackage[preprint]{neurips_2026}
\usepackage[bottom]{footmisc}
\makeatletter
\AtBeginDocument{
  \renewcommand{\@notice}{%
    \begingroup
      \renewcommand\thefootnote{}%
      \footnotetext{\@noticestring}%
    \endgroup
  }%
}
\makeatother

\usepackage[utf8]{inputenc} % allow utf-8 input
\usepackage[T1]{fontenc}    % use 8-bit T1 fonts
\usepackage{hyperref}       % hyperlinks
\usepackage{url}            % simple URL typesetting
\usepackage{booktabs}       % professional-quality tables
\usepackage{amsfonts}       % blackboard math symbols
\usepackage{nicefrac}       % compact symbols for 1/2, etc.
\usepackage{microtype}      % microtypography
\usepackage[table]{xcolor}% colors
\usepackage{makecell}

\usepackage{graphicx}
\usepackage{subcaption}
\usepackage{caption}

\usepackage{wrapfig}
\usepackage{hyperref}

\usepackage[normalem]{ulem}

\usepackage{amsmath}
\usepackage{amssymb}
\usepackage{mathtools}
\usepackage{amsthm}
\usepackage{algorithm}
\usepackage{algorithmic}
\usepackage{booktabs}
\usepackage{multirow}
\usepackage{tcolorbox}

\theoremstyle{plain}
\newtheorem{theorem}{Theorem}[section]
\newtheorem{proposition}[theorem]{Proposition}

\theoremstyle{definition}

\theoremstyle{remark}
\newtheorem{remark}[theorem]{Remark}

\usepackage[printonlyused]{acronym}
\acrodef{MELT}{Memory-Efficient Looped Transformer}
\acrodef{CoT}{Chain-of-Thought}
\acrodef{BPTT}{Backpropagation Through Time}
\acrodef{LLMs}{Large Language Models}
\acrodef{KV}{Key-Value}
\acrodef{TRMs}{Tiny Recursive Models}
\acrodef{GQA}{Grouped Query Attention}
\acrodef{CLA}{Cross-Layer Attention}
\acrodef{MLA}{Multi-Head Latent Attention}
\acrodef{RNN}{Recurrent Neural Networks}
\acrodef{MQA}{Multi-Query Attention}
\acrodef{GQA}{Grouped-Query Attention}
\acrodef{CLA}{Cross-Layer Attention}
\acrodef{MLA}{Multi‑Layer Attention}
\acrodef{KD}{Knowledge Distillation}
\usepackage{booktabs}
\usepackage{array}
\usepackage{makecell}
\usepackage{siunitx}
\usepackage{xcolor}
\usepackage[table]{xcolor}

\usepackage{xfrac}
\usepackage{lipsum}

\newcolumntype{G}{>{\columncolor{green!24}}c}

\setcitestyle{numbers,square}

\title{Memory-Efficient Looped Transformer: Decoupling Compute from Memory in Looped Language Models}

\author{%
  \textbf{Victor Conchello Vendrell}\thanks{Equal contribution.} \qquad
  \textbf{Arnau Padr\'es Masdemont}\footnotemark[1] \qquad
  \textbf{Niccol\`o Grillo} \\[2pt]
  \textbf{Jordi Ros-Giralt} \qquad
  \textbf{Arash Behboodi} \qquad
  \textbf{Fabio Valerio Massoli} \\[4pt]
  Qualcomm AI Research\thanks{Qualcomm AI Research is an initiative of Qualcomm Technologies, Inc.} \\
  \texttt{\{vconchel, apadres, ngrillo, fmassoli\}@qti.qualcomm.com}
}

\begin{document}

\maketitle

% \begin{abstract}
% \ac{LLMs} capable of ``reasoning'' typically rely on generating extensive intermediate \ac{CoT} tokens, incurring high latency and memory costs. Recurrent architectures, such as LoopLM, offer a parameter-efficient alternative by performing reasoning in the latent space. However, these models suffer from a fundamental trade-off: they either incur linear memory growth ($O(T)$) during training to support \ac{BPTT}, or suffer from catastrophic forgetting and vanishing gradients when constrained.
% In this work, we propose \textbf{\ac{MELT}}, a novel architecture that decouples reasoning depth from memory consumption. Instead of using a standard \ac{KV} cache, \ac{MELT} uses a state matrix with a constant size over loops depth. The state matrix is evolved over time via a learnable gating function update.  
% % This mechanism creates a gradient superhighway, enabling the training of deep reasoning loops ($T > 50$) without vanishing gradients. 
% Empirically, we show that training models obtained by modifying Ouro \citep{zhu2025scaling} to incorporate \ac{MELT} yields superior performance compared to standard LLMs of the same size, while achieving performance comparable to Ouro, confirming \ac{MELT} as an effective constant‑memory reasoning mechanism.
% \end{abstract}

\begin{abstract}
% \vspace{-10pt}
Recurrent LLM architectures have emerged as a promising approach for improving reasoning, as they enable multi-step computation in the embedding space without generating intermediate tokens. Models such as Ouro \citep{zhu2025scalinglatentreasoninglooped} perform reasoning by iteratively updating internal representations while retaining a standard \ac{KV} cache across iterations, causing memory consumption to grow linearly with reasoning depth. Consequently, increasing the number of reasoning iterations can lead to prohibitive memory usage, limiting the practical scalability of such architectures.
In this work, we propose \textbf{\ac{MELT}}, a novel architecture that decouples reasoning depth from memory consumption. Instead of using a standard \ac{KV} cache per layer and loop, \ac{MELT} maintains a single \ac{KV} cache per layer that is shared across reasoning loops.
This cache is updated over time via a learnable gating mechanism.
To enable stable and efficient training under this architecture, we propose to train \ac{MELT} using \textit{chunk-wise training} in a two phase procedure: \textit{interpolated transition}, followed by \textit{attention-aligned distillation}, both from the LoopLM starting model to MELT.
% that anchors MELT representations to the frozen LoopLM teacher.
Empirically, we show that MELT models fine-tuned from pretrained Ouro parameters \textbf{outperform standard LLMs of comparable size}, while maintaining a memory footprint comparable to those models and dramatically smaller than Ouro's.
Overall, MELT achieves constant-memory iterative reasoning without sacrificing LoopLM performance, using only a lightweight post-training procedure.

\end{abstract}

\enlargethispage{12mm}
\acresetall % repeat all acronyms

    \vspace{-10pt}
\begin{figure}[H]
    \centering
    \hfill
    \begin{minipage}{0.43\linewidth}
        \centering
        \includegraphics[width=\linewidth]{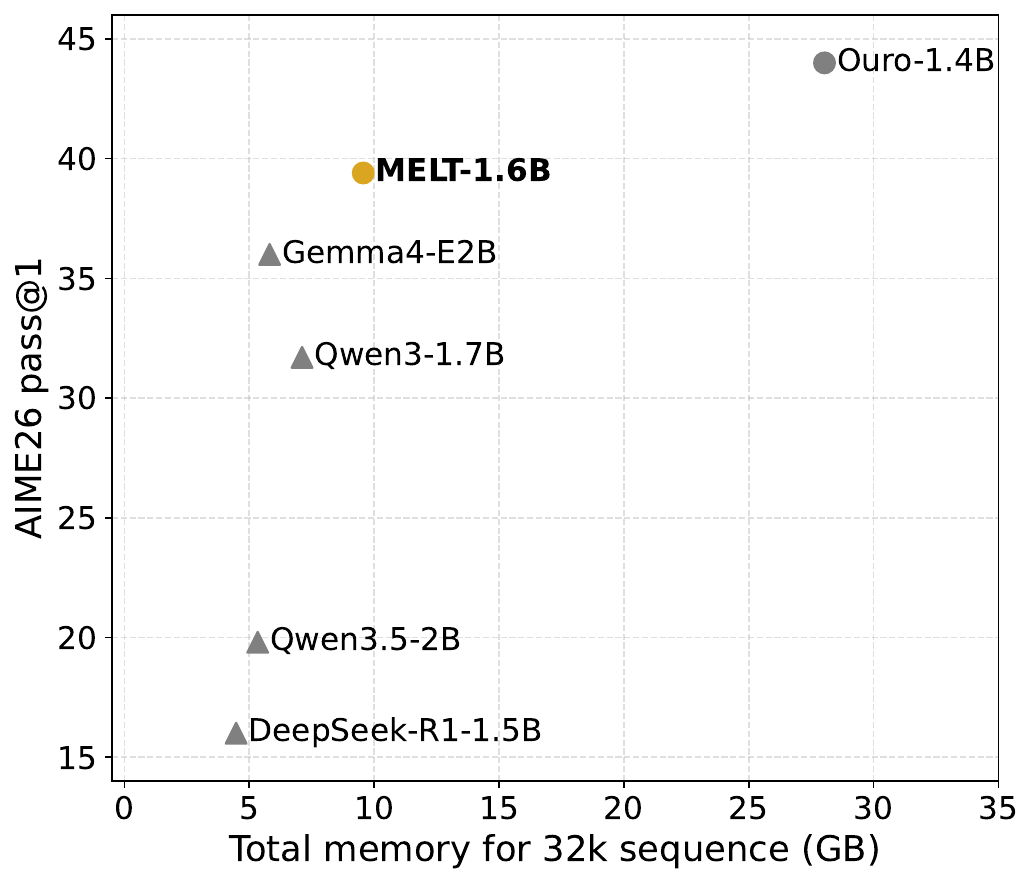}
    \end{minipage}
    \hfill
    \begin{minipage}{0.44\linewidth}
        \centering
        \includegraphics[width=\linewidth]{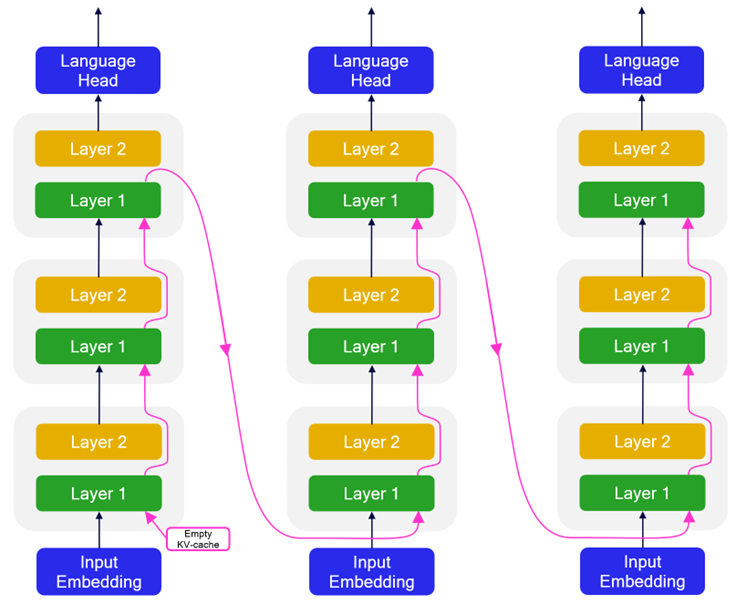}
    \end{minipage}
    \hfill

\begin{minipage}{0.99\linewidth}\vspace{1pt}\end{minipage}

  \hfill
  \begin{minipage}{0.53\linewidth} \centering a) MATH-500 accuracy versus memory usage \end{minipage}
  \begin{minipage}{0.45\linewidth} \centering b) High-level illustration of MELT \end{minipage}\hfill
  \hfill
  
    \caption{
\textbf{(a)} MELT achieves superior performance compared to similarly sized non-looped models, while maintaining an equivalent memory footprint, only slightly higher due to the absence of MQA. 
\textbf{(b)} As in looped transformers, layers are reused across iterations, but the KV cache is updated rather than expanded across loops.
}
    \label{fig:first_image}
    \vspace{-30pt}
\end{figure}

\section{Introduction} \label{sec:introduction}

\ac{LLMs} increasingly rely on inference-time compute to improve reasoning, shifting away from purely scaling training-time compute. A dominant approach is \ac{CoT} prompting, where models generate intermediate “thinking” tokens before producing a final answer. While effective, this couples reasoning depth to output length, increasing latency and memory usage. An alternative is latent reasoning, where models perform additional internal computation without generating extra tokens.

A prominent instantiation of latent reasoning is looped transformers, which perform recurrence at the architecture level by repeatedly passing hidden states through the same transformer stack. This approach was first explored in Universal Transformers~\citep{dehghani2019universal} and has recently shown impressive gains with LoopLM~\citep{zhu2025scalinglatentreasoninglooped}, demonstrating that looped models can match or surpass transformers nearly twice their size. However, these approaches suffer from a key limitation: memory grows linearly with the number of loops due to \ac{KV} states. To address this, we propose \textbf{\ac{MELT}}, which decouples reasoning depth from memory consumption by maintaining a single KV entry per token and layer, updated across loops via a learnable gating mechanism. This design preserves full attention while keeping memory usage fixed as iterative depth increases.

We demonstrate this approach by training a MELT model initialized from pretrained Ouro~\citep{zhu2025scalinglatentreasoninglooped} weights. Empirically, we show that MELT outperforms similarly sized standard transformers on reasoning benchmarks while preserving the performance of the originating LoopLM, but with dramatically lower memory than looped baselines that retain per-loop KV growth. The main contributions of this paper are:
\begin{itemize}
    \item We introduce \textbf{MELT}, a memory-efficient looped transformer architecture that decouples reasoning depth from memory consumption by sharing a single KV-cache per layer across reasoning loops and updating it with a learnable gating mechanism.
    \item We propose a data-efficient procedure for adapting pretrained LoopLMs to MELT through \textbf{chunk-wise training} and a two phase procedure: (i) \textbf{interpolated transition} from LoopLM to MELT  and (ii) \textbf{attention-aligned distillation} using the frozen LoopLM as a layer-wise teacher to consolidate the learned representations.
    \item We empirically show that a MELT model initialized from pretrained Ouro parameters \textbf{outperforms standard LLMs} of comparable size, while matching their memory footprint and using substantially less memory than Ouro.
\end{itemize}

All the code to replicate our experiments and the model itself will be released soon.

\section{Related work} \label{sec:related}

\vspace{-5pt}
This section provides a concise overview of related works, see \autoref{app:related} for an extended version.
\vspace{-5pt}

\paragraph{Looped transformers.}
While \ac{CoT}~\citep{wei2022chain} emphasizes horizontal reasoning, a complementary line of work explores vertical reasoning via recurrent architectures. Early approaches such as HRM and TRM~\citep{wang_hierarchical_2025,jolicoeur2025less}, as well as adaptive-depth methods that dynamically skip or repeat layers~\citep{li2025skip_or_loop_test_time_depth,fu2025think}, highlight the benefits of iterative computation. More broadly, looped transformers have emerged as a strong architectural paradigm, outperforming similarly sized vanilla transformers on multi-hop reasoning, length generalization, and algorithmic tasks~\citep{saunshi2025latent_thoughts_looped_transformers,kohli2026loopthinkgeneralize,fan2025looped_length_generalization,yang2024looped_learning_algorithms}. Despite classical optimization challenges such as instability and vanishing gradients~\citep{dehghani2019universal}, recent work demonstrates stable training at scale~\citep{zhu2025scalinglatentreasoninglooped,geiping2025recurrent_depth_latent_reasoning, prairie2026parcaescalinglawsstable} across different designs, including fully looped stacks and middle-cycle architectures~\citep{geiping2025recurrent_depth_latent_reasoning,zeitoun2026hyperlooptransformers}. These results establish looped transformers as a promising direction for scalable reasoning through iterative compute.

\paragraph{KV cache compression and vertical sharing.}
Efficient KV cache management is critical in looped and long-context models, where memory typically scales with effective depth. Prior work has explored redundancy across heads, layers, and recurrence steps, including MQA/GQA for head sharing~\citep{shazeer2019mqa,ainslie2023gqa} and cross-layer reuse methods such as CLA and MLA~\citep{brandon2024cross,deepseek2024}. In looped transformers, several approaches reduce KV growth by selectively reusing or compressing cached states, including hybrid global–local attention~\citep{wu2025parallellooptransformerefficient}, recursion-aware caching and sharing~\citep{bae2025mixtureofrecursionslearningdynamicrecursive}, and untrained reuse across loops~\citep{geiping2025recurrent_depth_latent_reasoning,zhu2025scalinglatentreasoninglooped}. While some of these methods can reduce memory costs in constrained settings, their effectiveness remains limited on long, complex reasoning tasks, where they often lead to performance degradation when applied to stronger models and longer reasoning traces (see Appendix~\ref{app:untrained_ablations}).

\paragraph{Training transitions and representation-level distillation.}
Adapting pretrained models to new architectures requires gradual transitions to avoid destabilization. Our approach is most closely related to progressive growing~\citep{karras2018progressivegrowinggansimproved}, which interpolates between existing and newly introduced components, and to subsequent work on gradual training and adaptation~\citep{chen2026progressiveresidualwarmuplanguage,li2017learningforgetting} as well as architectural modification in LLMs~\citep{cheng2026attentioneditingversatileframework,komatsuzaki2023sparseupcyclingtrainingmixtureofexperts}. Complementarily, \ac{KD}~\citep{hinton_distilling_2015} has been used to stabilize model adaptation, with prior work showing that aligning intermediate representations improves transfer and robustness~\citep{aguilar_knowledge_2020,chen_cross-layer_2021}. This has proven effective in LLMs, where layer-wise supervision enables compact models~\citep{muralidharan_compact_2024} and strict activation matching mitigates representation drift in complex reasoning settings~\citep{hao_token_2025,fang_knowledge_2026}. Building on these ideas, we propose training with an \textit{interpolated transition} and \textit{attention-aligned distillation}.

\section{Memory-Efficient Looped Transformer} \label{sec:method}

\subsection{Preliminaries}

\paragraph{Notation.}
Throughout the paper, we use the following notation.  
The model has $N$ \emph{layers}, each a distinct transformer block with its own parameters, and uses a \textit{hidden dimension} $d$ for internal representations. The \emph{sequence length}, $L$, corresponds to the number of tokens in the input. The \emph{reasoning depth} or \emph{time index}, $T$, refers to the number of \textit{reasoning loops} or \textit{time steps} applied to a single token.

\paragraph{LoopLM architecture.}
We adopt the LoopLM architecture~\citep{zhu2025scalinglatentreasoninglooped} for causal sequence modeling, following the formulation used in prior looped‑reasoning models. This design increases per‑token computation without expanding the parameter count. Let $\mathrm{emb}(\cdot) : \mathbb{R}^{|V|} \to \mathbb{R}^{d}$ denote the token embedding map, $\mathcal{T}_{\theta}(\cdot) : \mathbb{R}^{L \times d} \to \mathbb{R}^{L \times d}$ a causal Transformer layer with parameters $\theta$, and $\mathrm{lmhead}(\cdot) : \mathbb{R}^{d} \to \mathbb{R}^{|V|}$ the output projection. A standard (non‑looped) language model composes $N$ layers as $\mathcal{M} = \mathcal{T}_{\theta_N} \circ \cdots \circ \mathcal{T}_{\theta_1}$. In the looped setting, this stack is applied repeatedly for $T$ iterations, so the forward pass becomes:
\[
F(\cdot) = \mathrm{lmhead} \circ \mathcal{M} \circ \mathrm{emb}(\cdot)
\;\rightarrow\;
F^{(T)}(\cdot)
=
\mathrm{lmhead} \circ
\underbrace{\mathcal{M} \circ \cdots \circ \mathcal{M}}_{\text{$T$ iterations}}
\circ \mathrm{emb}(\cdot).
\]

% Let $\mathrm{emb}(\cdot) : \mathbb{R}^{|V|} \to \mathbb{R}^{d}$ denote the token embedding map, $\mathcal{T}_{\theta}(\cdot) : \mathbb{R}^{L \times d} \to \mathbb{R}^{L \times d}$ a causal Transformer layer with parameters $\theta$, and $\mathrm{lmhead}(\cdot) : \mathbb{R}^{d} \to \mathbb{R}^{|V|}$ the output projection. A standard (non‑looped) model stacks $N$ layers; using $\circ$ for composition, a forward pass is
% \[
% F(\cdot) = \mathrm{lmhead} \circ \mathcal{M} \circ \mathrm{emb}(\cdot), \qquad 
% \mathcal{M}(\cdot) = \mathcal{T}_{\theta_N} \circ \cdots \circ \mathcal{T}_{\theta_1}(\cdot).
% \]
% In the looped formulation, the same depth‑$N$ stack is applied for $T$ iterations before the output head:
% \[
% \label{eq:looped-forward}
% F^{(T)}(\cdot)
% = \mathrm{lmhead} \circ 
% \underbrace{\mathcal{M} \circ \cdots \circ \mathcal{M}}_{\text{$T$ iterations}} 
% \circ \, \mathrm{emb}(\cdot).
% \]

\vspace{-10pt}
\subsection{Architecture} \label{sec:arch}

\begin{figure}[!t]
\centering
    \centering
    \includegraphics[width=\linewidth]{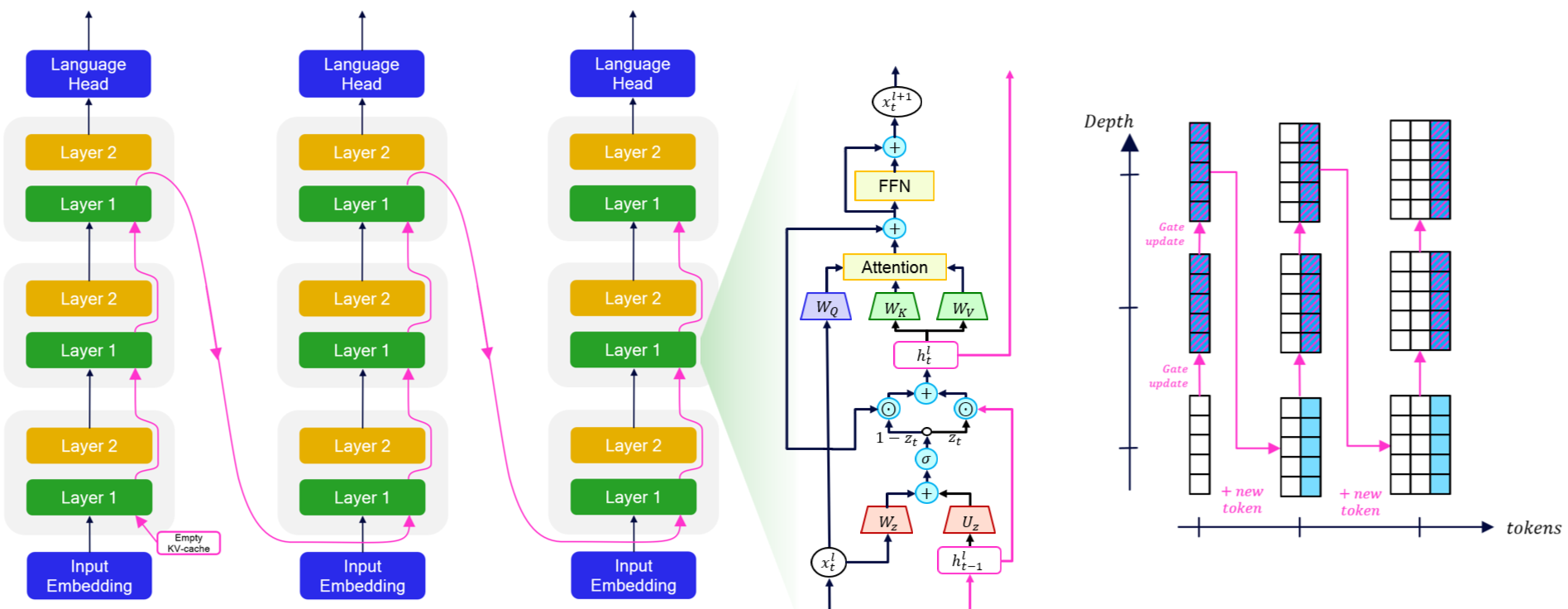}
  \begin{minipage}{0.47\linewidth} \centering a) Full MELT model \end{minipage}\hfill
  \begin{minipage}{0.21\linewidth} \centering b) Layer architecture \end{minipage}\hfill
  \begin{minipage}{0.3\linewidth} \centering c) KV cache updates \end{minipage}
  \caption{
    Visualization of the MELT architecture and its KV cache dynamics. The pink arrows highlight the flow of the KV cache for layer 1.
    \textbf{(a)} Example with 2 layers and 3 loops. As in looped transformers, layers are reused across iterations, but the KV cache is updated rather than expanded across loops. 
    \textbf{(b)} Each layer follows a standard transformer structure, augmented with a latent state update used to compute the KV representations. 
    \textbf{(c)} $H_t^n$ denotes the latent state at time step $t$ for token $n$ in a given layer, before being projected into KV. A single shared KV cache adds one row per token and updates it across loops via the gating mechanism.
  }
  \label{fig:method_inference}
\end{figure}

There are three key differences that separate our architecture from LoopLM: \vspace{-5pt}
\begin{itemize}
    \item The per-layer \ac{KV} cache has a fixed size independent of the reasoning depth. Consequently, the total cache scales as $\mathcal{M}_{\text{MELT}} \propto \mathcal{O}(N \times L)$, compared to $\mathcal{M}_{\text{LoopLM}} \propto \mathcal{O}(N \times L \times T)$.
    \item Instead of appending a new state at every loop step, each loop \emph{updates} the cached state of the token. A new state is added only at the first time step, and after all iterations these updated states are passed to subsequent tokens.
    \item Our gating mechanism enables each token, at each time step, to attend to keys and values that integrate information across \emph{all} time steps of preceding tokens, rather than only the current step.
\end{itemize}
\vspace{-10pt}

\paragraph{Preserving query-key alignment and memory dynamics.}
A key design choice in \ac{MELT} is to maintain a separate latent state $h$ that evolves across iterations, from which keys and values are derived through learned projections ($W_K, W_V$), rather than directly updating the \ac{KV} cache at each loop step. This choice is motivated by preserving semantic integrity, decoupling memory updates from attention retrieval, and maintaining query–key alignment. By evolving a latent state and projecting it into $K, V$ space, we preserve alignment across recurrent updates while separating memory dynamics from retrieval.

This design also leads to a fundamentally different memory behavior. Standard looped transformers follow an append-only strategy, where the per-layer \ac{KV} cache grows linearly with both sequence length $L$ and reasoning depth $T$, i.e., \mbox{$\mathcal{M}^{(l)}_{\text{LoopLM}} \propto \mathcal{O}(L \times T)$}, resulting in prohibitively large memory overhead for deep reasoning. In contrast, \ac{MELT} maintains a latent state $h^{(l)}_t$ with size independent from depth, yielding \mbox{$\mathcal{M}^{(l)}_{\text{MELT}} \propto \mathcal{O}(L)$}. The latent state is updated via a learnable gated momentum mechanism:
\begin{equation} \label{eqn:gated_update}
\begin{aligned}
    z^{(l)}_t &= \sigma\left(x^{(l)}_t W^{(l)}_z + h^{(l)}_{t-1} U^{(l)}_z + b^{(l)}_z \right), \\
    h^{(l)}_t &= z^{(l)}_t \odot h^{(l)}_{t-1} + (1 - z^{(l)}_t) \odot x^{(l)}_t    
\end{aligned}
\end{equation}
where $x^{(l)}_t$ is the hidden state and $z^{(l)}_t$ the gating function. This reduces the depth-wise memory complexity to $\mathcal{O}(1)$ per layer, effectively recovering the footprint of non-looped transformers. As a result, the burden of retaining information shifts from explicit storage (KV cache) to the learned gating dynamics, which determine what information is preserved or overwritten over time.

\paragraph{Integration into the Transformer.}
This latent state is then used to generate the key and value representations for the current token,
\[
k^{(l)}_t = h^{(l)}_t W^{(l)}_{K}
\qquad
v^{(l)}_t = h^{(l)}_t W^{(l)}_{V}
\]
where $W^{(l)}_{K}, W^{(l)}_{V} \in \mathbb{R}^{d \times d}$ are learned projection matrices. The resulting $k^{(l)}_t$ and $v^{(l)}_t$ are appended to the KV-cache produced by earlier tokens at the same layer,
\[
K^{(l)}_{t} = \bigl[K^{(l)}, k^{(l)}_t \bigr],
\qquad
V^{(l)}_{t} = \bigl[V^{(l)}, v^{(l)}_t\bigr],
\]
which is then consumed by the attention mechanism to compute the updated hidden state $x^{(l+1)}_t$
\[
    x^{(l)}_{attn}=Attn^{(l)}(q^{(l)}, K^{(l)}_t, V^{(l)}_t) + x^{(l)}_t \quad\quad
    x^{(l+1)}_t = FFN^{(l)}(x^{(l)}_{attn}) + x^{(l)}_{attn}
\]

An overview of the MELT architecture is shown in \autoref{fig:method_inference}. Further theoretical analysis and analysis of gradient flow and stability is provided in \autoref{app:theory}.

\subsection{Training details}\label{sec:training_details}

\paragraph{Chunk-wise training.}

A key challenge in training MELT arises from its KV-cache computation, which introduces a sequential dependency across tokens: the KV cache for token $t{+}1$ can only be computed after completing the forward pass for token $t$. This contrasts with standard transformers (and Ouro), where KV caches depend only on per-layer activations, enabling parallel token processing during SFT. While fully autoregressive training would respect this dependency, it is prohibitively slow, whereas bypassing the final reasoning loop restores parallelism but introduces a mismatch with inference dynamics.

To balance efficiency and fidelity, we propose \textbf{chunk-wise training}, illustrated in \autoref{fig:method_training}. Sequences are split into fixed-length chunks processed sequentially, while computations within each chunk are performed in parallel using the current loop’s latent state. Across chunks, the full computation is completed and the final latent state is propagated, better approximating autoregressive inference. The chunk size controls the fidelity–efficiency trade-off: smaller chunks more closely match inference at the cost of throughput, while larger chunks improve efficiency but introduce greater deviation.

\begin{figure}[t]
\centering
    \centering
    \includegraphics[width=\linewidth]{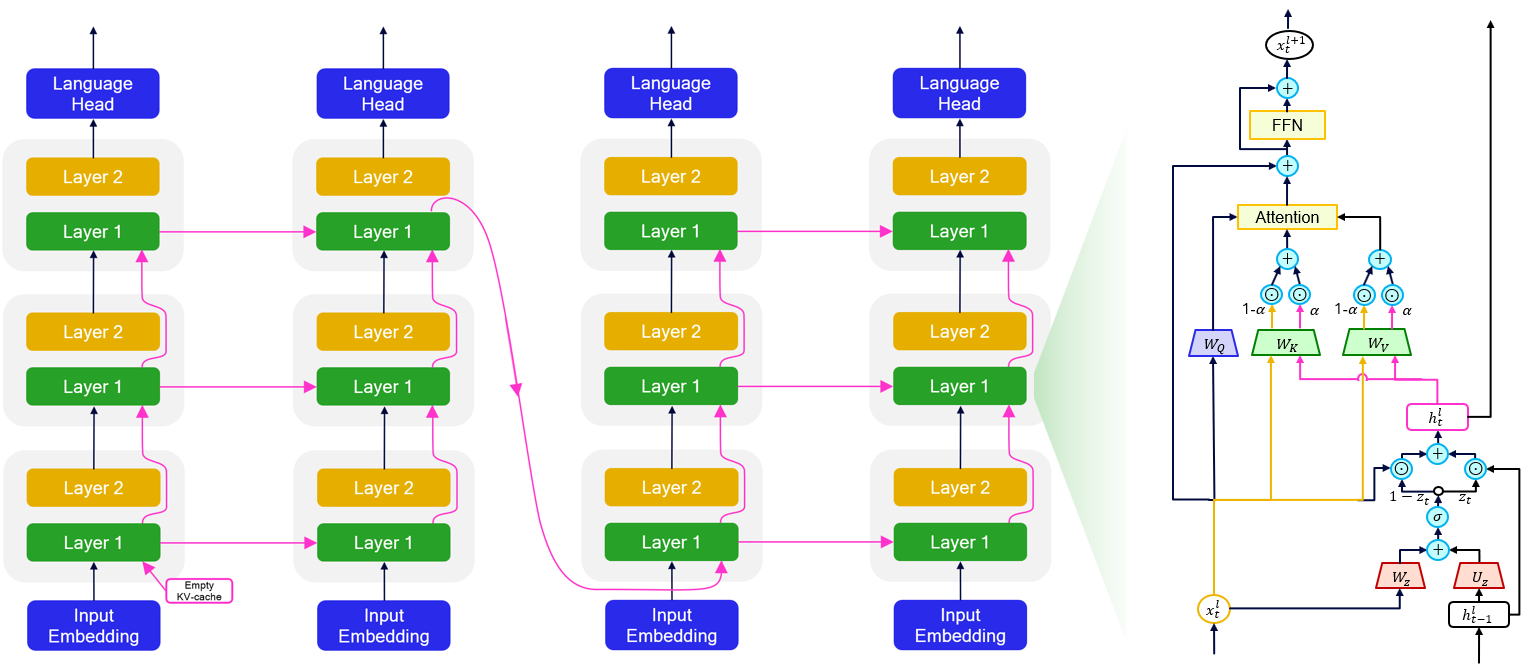}
  \begin{minipage}{0.7\linewidth } \centering a) Chunk-wise training \end{minipage}\hfill
  \begin{minipage}{0.28\linewidth} \centering b) Interpolated transition \end{minipage}\hfill
  \caption{
    Visualization of the Phase~1 training techniques proposed. 
    \textbf{(a)} Example with sequence length 4 and chunk size 2. MELT’s KV cache is computed in parallel within each chunk and sequentially across chunks, balancing training efficiency with a closer approximation to autoregressive inference. 
    \textbf{(b)} During early training steps, two KV caches are computed: the standard LoopLM version (orange) and the MELT variant (pink). These are linearly combined using a coefficient $\alpha$, which increases from 0 to 1, enabling a smooth transition between the two behaviors.
  }
  \label{fig:method_training}
\end{figure}

\paragraph{Interpolated transition.}
Because chunk‑wise training increases training time, we fine‑tune MELT from a pretrained LoopLM rather than training from scratch, reusing the base model’s acquired knowledge. However, the architectural changes introduced by MELT significantly disrupt this initialization: the model initially behaves like an untrained network and, despite fast optimization, it remains far from the original LoopLM.

To mitigate this effect and ensure a smoother transition, we introduce \textbf{training with interpolated transition}, illustrated in \autoref{fig:method_training}. During training, two KV pairs are computed in parallel: $KV_{\text{base}}$ from the hidden states as in a standard LoopLM and $KV_{\text{MELT}}$ from the MELT architecture. The KV values used by the model is a linear interpolation
\[
KV = \alpha \, KV_{\text{MELT}} + (1 - \alpha) \, KV_{\text{base}},
\]
where $\alpha$ increases linearly from $0$ to $1$ during training, enabling a smooth transition from LoopLM to MELT.

To further preserve alignment with the pretrained model, we apply Knowledge Distillation~\citep{hinton_distilling_2015} using the initial LoopLM as teacher, applying supervision at all reasoning loops. This denser signal improves convergence and stabilizes training.

\paragraph{Attention-aligned distillation.}

After the interpolation phase reaches $\alpha=1$, the model operates entirely under MELT dynamics. While training could simply continue from this point, we observe that unconstrained continuation degrades performance, suggesting that MELT representations drift away from the pretrained LoopLM behavior.

To prevent this, we introduce a second training phase. In this phase, the original LoopLM is kept frozen and used as a teacher for knowledge distillation, complemented by an attention‑alignment loss that aligns MELT’s post-attention representations with those of the teacher at every layer and loop (see \autoref{fig:align_loss}). The resulting objective is
\[
\mathcal{L}
=
\mathcal{L}_{\mathrm{KD}}
+
\beta 
\frac{1}{NT}
\sum_{l=1}^{N}
\sum_{t=1}^{T}
\left\|
o^{(l,t)}_{\mathrm{MELT}}
-
\mathrm{sg}\!\left(o^{(l,t)}_{\mathrm{LoopLM}}\right)
\right\|_2^2,
\]
where $o^{(l,t)}_{\mathrm{MELT}}$ and $o^{(l,t)}_{\mathrm{LoopLM}}$ denote the post-attention representations at layer $l$ and loop $t$, $\beta$ controls the strength of the alignment term, and $\mathrm{sg}(\cdot)$ denotes the stop-gradient operator. This term enforces alignment at all layers and loops, stabilizing training and further reducing the gap to the original LoopLM (see \autoref{tab:component_ablation}).

\section{Experimental results} \label{sec:experiments}

\begin{table}[t]
  \centering
  \caption{Performance comparison across benchmarks. We use \textbf{bold} and \underline{underlining} to denote the best and second-best performance, respectively.}
\vspace{0.9\baselineskip}
  \label{tab:perf_comparison_transposed}

  \setlength{\tabcolsep}{4pt}
  \renewcommand{\arraystretch}{1.1}
\hspace*{-8pt}
  \begin{tabular}{l l c G c c c c}
    \toprule
    \textbf{Dataset} & \textbf{Metric}
& \makecell{\textbf{Ouro-1.4B}\\\textbf{Thinking}}
& \makecell{\textbf{MELT}\\\textbf{1.6B}}
& \makecell{\textbf{Qwen3}\\\textbf{1.7B}}
& \makecell{\textbf{Gemma4}\\\textbf{E2B}}
& \makecell{\textbf{Qwen3.5}\\\textbf{2B}}
& \makecell{\textbf{DeepSeek-R1}\\\textbf{1.5B}}
\\
    \midrule

    \multirow{2}{*}{AIME24}
    & pass@1 & \textbf{50.2}{\tiny\textcolor{gray}{$\pm$1.6}} & \underline{46.7}{\tiny\textcolor{gray}{$\pm$1.6}} & 43.1{\tiny\textcolor{gray}{$\pm$1.5}} &  40.6{\tiny\textcolor{gray}{$\pm$1.8}} & 19.0{\tiny\textcolor{gray}{$\pm$1.3}} & 32.1{\tiny\textcolor{gray!110}{$\pm$1.6}} \\
    & pass@10
    & \textbf{81.5}{\tiny\textcolor{gray}{$\pm$1.9}} & \underline{79.9}{\tiny\textcolor{gray}{$\pm$2.4}} & 76.2{\tiny\textcolor{gray}{$\pm$2.6}} & 68.5{\tiny\textcolor{gray}{$\pm$2.8}} & 47.0{\tiny\textcolor{gray}{$\pm$2.5}} & 75.7{\tiny\textcolor{gray}{$\pm$3.5}} \\[0.1cm]

    \multirow{2}{*}{AIME25}
    & pass@1
    & \textbf{36.7}{\tiny\textcolor{gray}{$\pm$1.5}} & \underline{33.3}{\tiny\textcolor{gray}{$\pm$1.3}} & 33.1{\tiny\textcolor{gray}{$\pm$1.3}} & 26.5{\tiny\textcolor{gray}{$\pm$1.3}} & 16.9{\tiny\textcolor{gray}{$\pm$1.1}} & 20.4{\tiny\textcolor{gray}{$\pm$1.2}} \\
    & pass@10
    & \textbf{69.0}{\tiny\textcolor{gray}{$\pm$2.5}} & \underline{61.9}{\tiny\textcolor{gray}{$\pm$2.7}} & 58.6{\tiny\textcolor{gray}{$\pm$2.7}} & 50.1{\tiny\textcolor{gray}{$\pm$2.9}} & 37.1{\tiny\textcolor{gray}{$\pm$2.8}} & 46.0{\tiny\textcolor{gray}{$\pm$2.7}} \\[0.1cm]

    \multirow{2}{*}{AIME26}
    & pass@1
    & \textbf{44.0}{\tiny\textcolor{gray}{$\pm$1.5}} & \underline{41.0}{\tiny\textcolor{gray}{$\pm$1.6}} & 31.7{\tiny\textcolor{gray}{$\pm$1.4}} & 36.0{\tiny\textcolor{gray}{$\pm$1.7}} & 16.0{\tiny\textcolor{gray}{$\pm$1.3}} & 19.8{\tiny\textcolor{gray}{$\pm$1.3}} \\
    & pass@10
    & \underline{73.2}{\tiny\textcolor{gray}{$\pm$2.4}} & \textbf{75.5}{\tiny\textcolor{gray}{$\pm$2.0}} & 61.5{\tiny\textcolor{gray}{$\pm$2.9}} & 58.3{\tiny\textcolor{gray}{$\pm$2.6}} & 46.7{\tiny\textcolor{gray}{$\pm$2.4}} & 48.7{\tiny\textcolor{gray}{$\pm$2.7}} \\

    \multirow{2}{*}{AMC23}
    & pass@1
    & \underline{81.2}{\tiny\textcolor{gray}{$\pm$1.2}} & 80.2{\tiny\textcolor{gray}{$\pm$1.2}} & 79.2{\tiny\textcolor{gray}{$\pm$1.2}} & \textbf{82.7}{\tiny\textcolor{gray}{$\pm$1.1}} & 64.4{\tiny\textcolor{gray}{$\pm$1.4}} & 70.9{\tiny\textcolor{gray}{$\pm$1.3}} \\
    & pass@10
    & \underline{96.6}{\tiny\textcolor{gray}{$\pm$1.0}} & \textbf{97.8}{\tiny\textcolor{gray}{$\pm$1.5}} & \textbf{97.8}{\tiny\textcolor{gray}{$\pm$1.5}} & 95.0{\tiny\textcolor{gray}{$\pm$1.9}} & 92.0{\tiny\textcolor{gray}{$\pm$1.2}} & 96.1{\tiny\textcolor{gray}{$\pm$1.4}} \\[0.1cm]

    MATH-500
    & accuracy
    & \textbf{94.4}{\tiny\textcolor{gray}{$\pm$1.0}} & \underline{93.4}{\tiny\textcolor{gray}{$\pm$1.1}} & 90.6{\tiny\textcolor{gray}{$\pm$1.3}} & 87.6{\tiny\textcolor{gray}{$\pm$1.5}} & 79.4{\tiny\textcolor{gray}{$\pm$1.8}} & 84.2{\tiny\textcolor{gray}{$\pm$1.6}} \\[0.1cm]

    OlympiadB
    & accuracy
    & \textbf{67.5}{\tiny\textcolor{gray}{$\pm$1.9}} & \underline{64.7}{\tiny\textcolor{gray}{$\pm$2.0}} & 63.5{\tiny\textcolor{gray}{$\pm$2.0}} & 62.7{\tiny\textcolor{gray}{$\pm$2.0}} & 48.4{\tiny\textcolor{gray}{$\pm$2.1}} & 54.2{\tiny\textcolor{gray}{$\pm$2.1}} \\[0.15cm]
    \rowcolor{gray!15}\textbf{Avg math} & pass@1 & \textbf{62.3} & \underline{59.9} & 56.9 & 56.0 & 40.7 & 46.9 \\
    \midrule

    GPQA& accuracy
    & 40.8{\tiny\textcolor{gray}{$\pm$2.3}} & \underline{42.6}{\tiny\textcolor{gray}{$\pm$2.3}} & 37.3{\tiny\textcolor{gray}{$\pm$2.3}} & 39.1{\tiny\textcolor{gray}{$\pm$2.3}} & \textbf{45.1}{\tiny\textcolor{gray}{$\pm$2.4}} & 31.9{\tiny\textcolor{gray}{$\pm$2.2}} \\[0.1cm]

    HLE& accuracy
    & \textbf{2.7}{\tiny\textcolor{gray}{$\pm$0.9}} & \underline{2.0}{\tiny\textcolor{gray}{$\pm$0.8}} & 1.3{\tiny\textcolor{gray}{$\pm$0.7}} & \underline{2.0}{\tiny\textcolor{gray}{$\pm$0.8}} & 1.7{\tiny\textcolor{gray}{$\pm$0.7}} & \underline{2.0}{\tiny\textcolor{gray}{$\pm$0.8}} \\[0.1cm]

    MMLU-Red& accuracy
    & \underline{74.2}{\tiny\textcolor{gray}{$\pm$0.6}} & \underline{74.2}{\tiny\textcolor{gray}{$\pm$0.6}} & 73.8{\tiny\textcolor{gray}{$\pm$0.6}} & \textbf{75.3}{\tiny\textcolor{gray}{$\pm$0.6}} & \textbf{75.3}{\tiny\textcolor{gray}{$\pm$0.6}} & 53.3{\tiny\textcolor{gray}{$\pm$0.7}} \\[0.1cm]

    Humaneval& accuracy
    & \underline{76.8}{\tiny\textcolor{gray}{$\pm$3.3}} & \textbf{81.7}{\tiny\textcolor{gray}{$\pm$3.0}} & 71.3{\tiny\textcolor{gray}{$\pm$3.5}} & 61.6{\tiny\textcolor{gray}{$\pm$3.6}} & 26.2{\tiny\textcolor{gray}{$\pm$3.4}} & 57.3{\tiny\textcolor{gray}{$\pm$3.9}} \\[0.15cm]
    \rowcolor{gray!15}\textbf{Avg non-math} & pass@1 & \underline{48.6} & \textbf{50.1} & 45.9 & 45.5 & 37.1 & 36.1 \\

    \bottomrule
  \end{tabular}
  \vspace{-4pt}
\end{table}

\subsection{Experimental setup} \label{sec:exp-setup}

We initialize our model, MELT-1.6B, using the pretrained weights of Ouro-1.4B-Thinking~\citep{zhu2025scalinglatentreasoninglooped}, except for the new gating parameters, which are initialized randomly. 
% Since the number of layers is $24$ and the hidden dimension is $2048$, the number of additional gating parameters is $24 \times 2048^2 \times 2 \approx 0.2$B. The factor of two accounts for the two projection matrices at each layer: one for the hidden state and one for the latent state.
Because MELT modifies the \ac{KV} cache structure and introduces randomly initialized gating parameters, this hybrid initialization leads to initially incoherent outputs. To address this, we fine‑tune the full model in two stages, as described in \autoref{sec:training_details}. In the first stage, we use chunk‑wise and interpolating training and, in the second stage, we apply chunk‑wise training with Attention‑Aligned Distillation. Both training on AceReason‑1.1‑SFT~\citep{liu_acereason-nemotron_2025} and OpenThoughts3~\citep{guha2025openthoughtsdatarecipesreasoning} datasets, focused on mathematical reasoning and coding. A summary of all training hyperparameters used in this stage is shown in \autoref{tab:hyperparameters}.
In total, training required 130 hours on a node with 8 H100 GPUs (80GB), corresponding to 1,040 GPU-hours. Further details on the compute used for preliminary experiments, ablations, and testing are provided in Appendix~\ref{app:hyperparams}.

% \begin{wraptable}{r}{0.5\linewidth}
% \centering
% \setlength{\tabcolsep}{6pt}
% \renewcommand{\arraystretch}{1.15}
% \small
% \begin{tabular}{@{} >{\raggedright\arraybackslash}m{0.22\textwidth} >{\raggedright\arraybackslash}p{0.25\textwidth} @{}}
% \toprule
% \textbf{Parameter} & \textbf{Value} \\
% \midrule
% Dataset-mix & AceReason-1.1-SFT, OpenThoughts3 \\
% % , OpenCodeReasoning, Llama-Nemotron-PostTraining-Dataset, DeepWriting-20K, OpenMathInstruct-2, NuminaMath-1.5 \\
% \# layers & 24 \\
% Hidden dimension & 2048 \\
% \# gating params. & $24 \times 2048^2 \times 2 \approx 0.2$B  \\
% % Gating params. init. & $\sim \mathcal{N}_{[-0.04,\,0.04]}(0,\;0.02^2)$ \\
% Original params. init. & Ouro-1.4B-Thinking \\
% Optimizer & Adam ($\beta_1{=}0.9$, $\beta_2{=}0.95$) \\
% Learning rate & $8\times10^{-6}$ \\
% Gate learning rate & $5.0\times10^{-4}$ \\
% LR scheduler & Cosine decay with warmup \\
% Weight decay & $1.0\times10^{-4}$ \\
% Gradient norm clipping & $1.0$ \\
% Batch size (tokens) & 320K \\
% Seq. length (tokens) & 10K \\
% Training tokens & 160M \\
% Recurrent steps & 4 \\
% Chunk size (tokens) & 500 \\
% Interpolation training steps & 500 \\
% \bottomrule
% \end{tabular}
% \caption{Training hyperparameters for \textbf{MELT--Ouro}.}
% \label{tab:hyperparameters}
% \vspace{-10pt}
% \end{wraptable}

To evaluate the reasoning capabilities of MELT, we benchmark the model on six mathematical reasoning benchmarks (AIME24~\citep{aime2024}, AIME25~\citep{aime2025},
AIME26~\citep{aime2026}, AMC23~\citep{amc2023},
MATH500~\citep{lightman2023lets},
OlympiadBench~\citep{he2024olympiadbench}) and four general reasoning benchmarks (GPQA~\citep{rein2023gpqagraduatelevelgoogleproofqa}, HLE~\citep{hle}, MMLU-Red~\citep{gema2024mmlu, hendrycks_measuring_2021}, Humaneval~\citep{chen2021evaluating}). For context, we compare its performance with the state-of-the-art non-looped models of its size (Qwen3-1.7B~\citep{yang2025qwen3technicalreport}, Gemma4-E2B~\citep{gemma}, Qwen3.5-2B~\citep{qwen35blog}, DeepSeek-R1-1.5B~\citep{Guo_2025}), as well as the looped model Ouro‑1.4B‑Thinking~\citep{zhu2025scalinglatentreasoninglooped}, from which MELT‑1.6B is derived. We evaluate all models with LightEval v0.8.1, using the default benchmark prompts, extraction procedures, and evaluation settings. Following~\citep{zhu2025scalinglatentreasoninglooped}, we use temperature $1.0$ and top-$p$ $0.7$; all evaluations use a maximum completion length of 32k tokens.

\subsection{Results}
\label{sec:results}

\autoref{tab:perf_comparison_transposed} shows that MELT consistently outperforms all non-looped baselines across both mathematical and general reasoning benchmarks, while maintaining a comparable memory footprint. In particular, MELT achieves superior performance on AIME24, AIME26, MATH500, OlympiadBench, MMLU, and HumanEval. It is only surpassed by Qwen3-1.7B on AIME25 and AMC23, and by Gemma4-E2B on GPQA. Overall, these results demonstrate that MELT performs strongly across both mathematical and general reasoning tasks.

Compared to Ouro, MELT is slightly behind across most benchmarks, which is expected given that Ouro retains a full per-loop KV cache and thus benefits from substantially higher memory usage. Interestingly, however, MELT outperforms Ouro on HumanEval. We discuss slight discrepancies with the Ouro paper benchmarks in Appendix~\ref{app:ouro_discussion}. Overall, these results highlight that MELT achieves a strong performance–efficiency trade-off, delivering superior results to non-looped models while approaching the performance of memory-intensive looped architectures.

\subsection{Exact memory usage}
\label{subsec:memory_numbers}

In this subsection we report \emph{exact} KV-cache memory usage numbers extracted from \texttt{vLLM}~\citep{vllm}, and we combine them with a simple weight-memory estimate to obtain an end-to-end VRAM requirement for long generations (32k tokens). This analysis highlights the substantial improvements achieved by MELT compared to Ouro, since for long-context generation the dominant contributor to memory usage is the KV-cache. For each model, we report:

    \emph{KV-cache per token (MB/token)}: obtained directly from \emph{vLLM}'s reported metrics.
    \emph{Model memory (GB)}: the memory required to store the model weights, obtained as $ M_{\text{model}} = 2 \cdot \#\text{params}$ bytes.
    \emph{KV-cache for a 32k-token generation (GB)}: the total memory consumed by the KV-cache when generating a 32{,}768-token sequence, computed as $M_{\text{KV},32k} = 32768 \cdot M_{\text{KV/token}}$.
    \emph{Total memory for a 32k generation (GB)}: the sum of model model memory and KV-cache for a 32k-token generation.

% \begin{itemize}
%     \item \textbf{KV-cache per token (MB/token)}: obtained directly from \texttt{vLLM}'s reported metrics.
%     \item \textbf{Model memory (GB)}: the memory required to store the model weights, obtained as $ M_{\text{model}} = 2 \cdot \#\text{params} \;\; \text{bytes} $
%     \item \textbf{KV-cache for a 32k-token generation (GB)}: the total memory consumed by the KV-cache when generating a 32{,}768-token sequence, computed as $M_{\text{KV},32k} = 32768 \cdot M_{\text{KV/token}},$
%     with the result reported in gigabytes.
%     \item \textbf{Total memory for a 32k generation (GB)}: the sum of model model memory and KV-cache for a 32k-token generation (GB).
% \end{itemize}

As shown in Table~\ref{tab:memory_usage}, Ouro exhibits the largest KV-cache footprint, as its loop-specific KV growth causes memory to scale linearly with the number of reasoning loops. In contrast, MELT decouples reasoning depth from KV growth by maintaining a constant-size latent state instead of appending new KV entries, reducing memory by $\sim\!3$-$4\times$. Although Qwen remains slightly more memory efficient in KV usage, the gap is small: for a 32k-token generation, Ouro exceeds Qwen by $\sim20$ GB, while MELT is only $\sim2.5$ GB higher. This difference stems from Qwen’s use of \ac{GQA}, which reduces KV memory by sharing keys and values across query heads, whereas MELT does not employ \ac{GQA}.

\begin{table}[t]
    \centering
    \caption{
    Exact KV-cache memory (from \texttt{vLLM}) and derived VRAM requirements for generating a 32k-token sequence.
    }
    \label{tab:memory_usage}
\vspace{0.9\baselineskip}
    \small
    \setlength{\tabcolsep}{7pt}
    \begin{tabular}{lcccc}
\toprule
        &
        \textbf{KV-cache} &
        \textbf{Model memory} &
        \textbf{KV-cache for 32k} &
        \textbf{Total for 32k} \\
        \textbf{Model} &
        \textbf{(MB/token)} &
        \textbf{(GB)} &
        \textbf{(GB)} &
        \textbf{(GB)} \\
        \midrule
        MELT-1.6B & 0.196608 & 3.272 & 6.29 & 9.49 \\
        Ouro-1.4B-Thinking  & 0.786432 & 2.869 & 25.17 & 27.97 \\
{\footnotesize \hspace{8pt} Memory improvement} & {\footnotesize \hspace{10pt} $\times$4} & {\footnotesize --} & {\footnotesize \hspace{6pt} $\times$4} & {\footnotesize \hspace{4pt} $\times$2.95} \\
        \midrule
        Qwen3-1.7B     & 0.114688 & 3.442 & 3.67 & 7.07 \\
        % \midrule
        % DeepSeek-R1-1.5B & 0.028672 & 3.554 & 0.92 & 4.47 \\
        % Qwen3.5-2B       & 0.049152 & 3.764 & 1.57 & 5.33 \\            
        % Gemma-4-E2B      & 0.035840 & 4.666 & 1.15 & 5.82 \\
        \bottomrule
    \end{tabular}
  \vspace{-10pt}
\end{table}

\subsection{Ablation studies}
\label{subsec:ablations}

\subsubsection{Gate mechanism variants}
A core component of MELT is its gated update mechanism, which controls how loop-specific information is accumulated into the latent state. To assess the necessity and effectiveness of this design, we train a set of variants in which the proposed element-wise gating mechanism is replaced with simpler aggregation schemes. All other components are kept identical, and we restrict training to the first stage to ensure a controlled comparison. Concretely, we compare the full MELT model against the following variants:

\emph{Mean}: the KV cache is computed as the average of the KV representations produced by all loops up to the current step.
\emph{EMA-0.2}: the KV cache is computed as an exponential moving average (EMA) of the KV representations up to the current step. The chosen decay factor (0.2) matches the average gate value observed in our trained MELT models. This is equivalent to the gated mechanism with gate value fixed to 0.2.
\emph{Last}: the KV cache is constructed solely from the final reasoning loop, discarding information from earlier loops.
\emph{Single-gated}: the gated update is replaced with a scalar gate per token, such that a single gating value modulates the entire hidden state uniformly, rather than using an element-wise gate.

% \begin{itemize}
%     \item \textbf{Mean}: the KV cache is computed as the average of the KV representations produced by all loops up to the current step.
%     \item \textbf{Ema-0.2}: the KV cache is computed as an exponential moving average (EMA) of the KV representations up to the current step. The chosen decay factor (0.2) matches the average gate value observed in our trained MELT models. This is equivalent to the gated mechanism with gate value fixed to 0.2.
%     \item \textbf{Last}: the KV cache is constructed solely from the final reasoning loop, discarding information from earlier loops.
%     \item \textbf{Single-gated}: the gated update is replaced with a scalar gate per token, such that a single gating value modulates the entire hidden state uniformly, rather than using an element-wise gate.
% \end{itemize}

Table~\ref{tab:gate_ablation} shows that MELT (element-wise gating, after Phase 1 training) consistently achieves the best performance. Among variants without additional parameters, \textit{Last} performs best and is comparable to \textit{Single-gated}, highlighting the importance of selective aggregation and the more effective utilization of information from later reasoning loops. We evaluate all ablations after Phase 1 training to isolate the effect of the gating mechanism.

\begin{table}[t]
\caption{MELT's gating mechanism ablation after Phase 1 of training. \textbf{Bold} denotes best performance. \hfill \vphantom{.}}
\label{tab:gate_ablation}
\vspace{0.9\baselineskip}
\centering
\small
  \begin{tabular}{l cc cc cc c}
\toprule
&
    \multicolumn{2}{c}{\textbf{AIME24}} &
    \multicolumn{2}{c}{\textbf{AIME25}} &
    \multicolumn{2}{c}{\textbf{AMC23}} &
    \multirow{2}{*}{\textbf{MATH-500}} \\
    \cmidrule(lr){2-3}\cmidrule(lr){4-5}\cmidrule(lr){6-7}
\textbf{Model Variant}  & pass@1 & pass@10
    & pass@1 & pass@10
    & pass@1 & pass@10
    &  \\
\midrule
MELT-1.6B (P1)    & \textbf{44.8} & \textbf{78.1} & \textbf{32.9} & \textbf{66.1} & \textbf{77.7} & \textbf{99.3} & \textbf{92.8} \\
Mean     & 29.0 & 57.8 & 23.3 & 46.4 & 68.8 & 94.1 & 83.2 \\
EMA-0.2 & 30.2 & 56.9 & 21.5 & 50.1 & 68.6 & 95.3 & 84.6 \\
Last & 33.7 & 59.8 & 24.0 & 50.4 & 69.7 & 96.2 & 84.0 \\
Single-gated  & 34.4 & 61.8 & 23.1 & 56.7 & 66.9 & 96.6 & 85.6 \\
\midrule
\end{tabular}
\vspace{-10pt}
\end{table}

\subsubsection{Component removal}

We next perform a \emph{component removal ablation} to assess the importance of individual training components in MELT. Starting from the full model, we progressively remove elements of the training procedure one by one, following their order of introduction, and fully retrain the model after each removal to ensure a fair comparison.

Specifically, we remove training mechanisms one by one to isolate their contribution, following the sequence: (i) removing attention-aligned distillation, using only the first training phase, (ii) additionally removing interpolation training, reverting to a direct transition from LoopLM to MELT; (iii) removing knowledge distillation on all loops, reducing training to standard SFT; and (iv) replacing chunk-wise training with fully parallel SFT.

\begin{table}[tb]
\caption{Component removal ablation for MELT. Starting from the full two-phase training recipe, components are progressively removed one by one (top to bottom). \textbf{Bold} denotes best performance.}
\label{tab:component_ablation}
\centering
\vspace{0.9\baselineskip}
\small
\hspace*{-8pt}
  \begin{tabular}{l cc cc cc c}
\toprule
&
    \multicolumn{2}{c}{\textbf{AIME24}} &
    \multicolumn{2}{c}{\textbf{AIME25}} &
    \multicolumn{2}{c}{\textbf{AMC23}} &
    \multirow{2}{*}{\textbf{MATH-500}} \\
    \cmidrule(lr){2-3}\cmidrule(lr){4-5}\cmidrule(lr){6-7}
\textbf{Component removed} & pass@1 & pass@10
    & pass@1 & pass@10
    & pass@1 & pass@10
    &  \\
\midrule
MELT-1.6B     & \textbf{46.7} & \textbf{79.9} & \textbf{33.3} & 61.9 & \textbf{80.2} & 97.8 & \textbf{93.4} \\
$-$ Att-aligned distillation & 44.8 & 78.1 & 32.9 & \textbf{66.1} & 77.7 & \textbf{99.3} & 92.8 \\
$-$ Interpolated transition & 35.4 & 63.7 & 26.9 & 57.9 & 73.0 & 93.1 & 86.6 \\
$-$ GKD-allLoops     & 35.8 & 63.9 & 24.4 & 48.6 & 67.2 & 95.9 & 85.2 \\
$-$ Chunk-wise training  & 0.0 & 0.0 & 0.0 & 0.0 & 0.0 & 0.0 & 0.0 \\
\midrule
\end{tabular}
\vspace{-10pt}
\end{table}

As shown by Table~\ref{tab:component_ablation}, each component yields a clear and consistent improvement over the preceding configuration across all benchmarks. Removing attention-aligned distillation (Phase~2) already causes a notable performance drop in most benchmarks, demonstrating its critical role in consolidating the learned MELT representations. Further removing interpolated transition within Phase~1 degrades performance, confirming that a smooth LoopLM-to-MELT transition is essential.  Disabling knowledge distillation worsens results further, and eliminating chunk-wise training leads to complete failure, confirming that respecting MELT’s sequential KV dynamics during training is indispensable. Overall, these results demonstrate that MELT’s performance arises from the cumulative effect of its two-phase training components, rather than from any single component in isolation.
% Additional synthetic experiments illustrating the behavior of MELT are provided in \autoref{app:synthetic}.

\section{Limitations and future work} \label{sec:limitations}

A limitation inherited from Ouro is that the number of recurrent loops is fixed at inference time. While this provides a simple mechanism to control compute, it does not account for the fact that different inputs and tokens may require varying amounts of reasoning. Notably, \ac{MELT}'s constant-size latent state makes it particularly well-suited for future extensions with adaptive loop depth, enabling dynamic allocation of reasoning steps based on input complexity.

Another limitation carried over from Ouro is that our current implementation does not yet explore \ac{GQA}. Extending \ac{MELT} to grouped query attention remains an important direction, as \ac{GQA} can reduce memory bandwidth and KV cache overhead during inference. Thanks to \ac{MELT}'s constant-memory design, combining it with \ac{GQA} is especially promising and could further improve efficiency, potentially closing the remaining gap in memory usage with standard Transformer baselines.

Finally, \ac{MELT} requires sequential KV updates during training, which constrains parallelism compared to standard transformer training. While our chunk-wise training and distillation procedure provides a practical adaptation path, developing more parallelizable training strategies remains an important direction for scaling \ac{MELT} to larger models, longer reasoning horizons, and broader application domains.

\section{Conclusion} \label{sec:conclusions}

We introduce \ac{MELT}, an architecture that enables deep latent reasoning in looped transformers by decoupling memory usage from reasoning depth. By replacing the append‑only KV‑cache with a \textbf{gated, constant‑size latent state}, \ac{MELT} allows inference‑time compute to scale without incurring linear memory growth. When integrated into Ouro, it achieves competitive reasoning performance relative to similarly sized baselines.

Training \ac{MELT}, however, requires particular care. Its KV cache computation introduces a sequential dependency across tokens, preventing the fully parallel token processing typically used during training. We address this challenge through \textbf{chunk-wise training}, which provides a controllable trade-off between inference fidelity (smaller chunks) and training efficiency (larger chunks). In addition, directly reusing Ouro’s architecture and weights proves challenging due to the substantial architectural changes introduced by MELT. To mitigate this, we employ two complementary techniques on top of \ac{KD}: an \textbf{interpolated transition} that enables a smooth shift from LoopLM to MELT dynamics, followed by \textbf{attention-aligned distillation}, where a frozen LoopLM teacher provides layer-wise supervision to stabilize and consolidate the learned representations.

Empirically, MELT delivers strong and consistent performance across both mathematical and general reasoning benchmarks. Notably, it surpasses similarly sized standard Transformer baselines while operating under the same constant memory budget, demonstrating that improved reasoning capability can be achieved without increasing memory. To our knowledge, MELT is the first architecture to exceed the performance of standard models with the same memory footprint. These results highlight the effectiveness of looped architectures and demonstrating that iterative computation provides meaningful gains in reasoning even when memory is strictly constrained.

\bibliography{references}
\bibliographystyle{unsrtnat}

% APPENDIX
\newpage
\appendix
\onecolumn

\section{Extended Related Work} \label{app:related}

\paragraph{Looped transformers.}
While \ac{CoT} \citep{wei2022chain} and other ITC techniques have recently been highly influential, a complementary direction has emerged that focuses on vertical reasoning via recurrent architectures.
Simple recurrent architectures such as HRM \citep{wang_hierarchical_2025} and TRM \citep{jolicoeur2025less} have demonstrated strong performance on targeted reasoning tasks, while transformer-based models have also been modified to incorporate looping mechanisms.
For instance, \citet{li2025skip_or_loop_test_time_depth} propose a per-sample test-time depth adaptation, where a pretrained LLM’s layers are treated as modules that can be skipped or repeated (looped) and reordered to form a sample-specific chain-of-layers.
On a similar direction, \citet{fu2025think} introduced adaptive computation budgets, employing a classifier to dynamically allocate additional latent iterations for difficult tokens that can be skipped or repeated (looped) and reordered to form a sample-specific chain-of-layers.

Most notably within this line of work, looped transformers have emerged as a powerful architectural choice.
Existing studies on simplified setups have shown that, compared to similar-sized vanilla transformers, looped transformers show superior capacity at multi-hop reasoning \citep{saunshi2025latent_thoughts_looped_transformers, kohli2026loopthinkgeneralize}, lenght generalization \citep{fan2025looped_length_generalization, kohli2026loopthinkgeneralize} and learning algorithms \citep{yang2024looped_learning_algorithms}. 
Despite the challenges of scaling such architectures when unrolled over many steps, most notably optimization instability and vanishing gradients \citep{dehghani2019universal}, recent studies demonstrate that looped transformers can already be trained stably at the scale of several billion parameters \citep{zhu2025scalinglatentreasoninglooped, geiping2025recurrent_depth_latent_reasoning}, including reasoning‑focused models \citep{zhu2025scalinglatentreasoninglooped}.
Existing approaches generally follow two strategies: either looping transformer layers directly across iterations \citep{zhu2025scalinglatentreasoninglooped}, or looping only a central subset of layers using a middle‑cycle design with fixed prelude and coda blocks \citep{geiping2025recurrent_depth_latent_reasoning, zeitoun2026hyperlooptransformers}. 
Additional work has reported favorable scaling behavior \citep{prairie2026parcaescalinglawsstable}, proposed architectural refinements \citep{zeitoun2026hyperlooptransformers}, and provided mechanistic insights into the internal dynamics of looped models \citep{blayney2026mechanisticanalysisloopedreasoning}, further supporting looped transformers as a robust and promising research direction.

\paragraph{KV cache compression and vertical sharing.}
Efficient KV cache management is central in scaling recurrent models and long-context regimes, where cache cost scales with effective depth. Beyond head-level sharing via \ac{MQA} \citep{shazeer2019mqa} and \ac{GQA} \citep{ainslie2023gqa}, recent work exploits vertical redundancy across layers or recurrence steps. \ac{CLA} \citep{brandon2024cross} shows that KV representations remain relatively stable across adjacent layers, allowing multiple layers to read from a shared cache. \citet{deepseek2024} further compresses this information via \ac{MLA} using low-rank projections.

More specifically, there are some works that address the growing KV Cache issue in looped transformers. \citet{wu2025parallellooptransformerefficient} propose a KV-cache computation method that combines a global component from the first loop with a local component attending to a sliding window of recent tokens in the current loop.
\citet{bae2025mixtureofrecursionslearningdynamicrecursive} propose two key mechanisms for KV cache efficiency: Recursion-wise Caching, which selectively updates and attends only to the KV pairs of active "thinking" tokens at each depth, and Recursive KV Sharing, which reuses the initial cache from the first loop across all subsequent steps.
\citet{geiping2025recurrent_depth_latent_reasoning} show that the looped transformer, without additional training, naturally reuses the first loop KV‑cache entry from previous tokens, independent of when recurrence stops, and can further compress the cache by sharing entries periodically across recurrent steps.
Finally, \citet{zhu2025scalinglatentreasoninglooped} show that, without additional training, retaining only the last or average KV-cache and reusing it across all loops can preserve performance on some tasks.
Although some of these approaches achieve no or moderate performance drops in their respective settings, their applicability to practical long-reasoning remains unclear as their evaluations are mostly limited to weaker looped models and short or constrained generation settings. In fact, our analysis (Appendix~\ref{app:untrained_ablations}) suggests that directly applying untrained cache-sharing methods to Ouro, including those proposed by \citet{geiping2025recurrent_depth_latent_reasoning} and \citet{zhu2025scalinglatentreasoninglooped}, significantly degrades performance on long reasoning tasks.

\paragraph{Interpolated transition mechanism.}
Our interpolated transition mechanism is most closely related to the progressive growing strategy of \citet{karras2018progressivegrowinggansimproved}. They progressively grow the generator and discriminator from low to high resolutions, using a linearly increasing parameter $\alpha$ to smoothly interpolate between the old lower-resolution pathway and the newly added higher-resolution pathway, thereby avoiding abrupt shocks to previously trained layers.
Later work has explored related forms of gradual training \citep{chen2026progressiveresidualwarmuplanguage}, task adaptation \citep{li2017learningforgetting}, and architecture expansion, including recent approaches for modifying LLM architectures \citep{cheng2026attentioneditingversatileframework, komatsuzaki2023sparseupcyclingtrainingmixtureofexperts}. However, these methods typically rely on distillation objectives or parameter reuse rather than an explicit fade-in between two competing architectures.

\paragraph{Activation-level knowledge distillation.}
An extensive body of literature exists on \ac{KD}~\citep{hinton_distilling_2015}. We build on the line of work that aligns intermediate representations rather than relying solely on final output logits. Notable examples within this vast space include foundational works that distill internal states to compress structural knowledge~\citep{aguilar_knowledge_2020} and cross-layer mechanisms designed to calibrate semantic alignment between teacher and student~\citep{chen_cross-layer_2021}. In the context of large language models,~\citet{muralidharan_compact_2024} recently demonstrated the efficacy of combining structural pruning with layer-wise KD to derive highly accurate, compact models without full retraining. Furthermore, recent frameworks emphasize strict internal activation alignment to prevent catastrophic drift~\citep{hao_token_2025} and preserve complex, multi-step reasoning trajectories~\citep{fang_knowledge_2026}. Building on these foundations, our approach addresses representation drift in continuous recurrent architectures by applying knowledge distillation across all MELT loops. We further introduce an attention-alignment loss that explicitly regularizes MELT’s post-attention representations against those of a frozen LoopLM teacher at every layer and reasoning loop.

\section{Analysis of existing KV-Cache sharing methods on long reasoning}
\label{app:untrained_ablations}

As discussed in the related work, several studies on KV‑cache sharing in looped models suggest that caches can be reused to reduce memory footprint with minimal performance degradation. However, their applicability to practical long‑reasoning settings remains unclear, as existing evaluations are largely restricted to weaker looped models and short or otherwise constrained generation regimes. In this section, we investigate whether these methods extend to state‑of‑the‑art looped models (Ouro \citep{zhu2025scalinglatentreasoninglooped}) on long‑horizon reasoning tasks.

To do so, we evaluate several untrained KV-cache sharing strategies in which the cache is reused across recurrent loops. Specifically, we consider sharing the KV cache from the last loop, following~\citet{zhu2025scalinglatentreasoninglooped}, and from the first loop, following~\citet{geiping2025recurrent_depth_latent_reasoning}. Since~\citet{zhu2025scalinglatentreasoninglooped} observe that preserving the full prompt KV cache can be beneficial, we evaluate both variants for each strategy: one that keeps the original KV cache for the prompt and one that applies the first/last-loop sharing rule to the prompt as well.

All four untrained KV-sharing variants obtain zero performance on several reasoning benchmarks (full results can be seen in Table ~\ref{tab:untrained_kv_sharing_ablation}). This contrasts with~\citet{zhu2025scalinglatentreasoninglooped}, who report that last-loop KV sharing with prefill achieves performance comparable to Ouro on GSM8K and MATH-500. We hypothesize that this discrepancy is partly due to their few-shot CoT evaluation setup (Appendix~C.1 in \citet{zhu2025scalinglatentreasoninglooped}), which constrains the generation format and reduces the likelihood of the model drifting during decoding. In our setting, the failure mode is apparent in qualitative outputs: as illustrated in Figure~\ref{fig:degenerate_reasoning}, generations with last-loop sharing and prefill often begin coherently but eventually degenerate during extended reasoning. This behavior is consistent with an accumulation of KV-cache mismatch errors: as generation proceeds farther from the prompt, the shared cache increasingly deviates from the cache produced by the original model. These results motivate the need for a constant-memory KV-cache method that remains stable over long reasoning traces, which is precisely the goal of our approach.

% see Table~\ref{tab:untrained_kv_sharing_ablation}, we observe that all four methods get zero performance in a bunch of reasoning tasks. 
% This contrasts, for eample, with \citet{zhu2025scalinglatentreasoninglooped}, where they find that last with prefill achieves comparable performance to ouro on gsm8k and math500. this is because they use few-shot prompting (see Appendix C.1 in \citet{zhu2025scalinglatentreasoninglooped}), which restrics the generation and helps the model not deviate from 
% if we analyse a response for last with prefill, see Figure~\ref{fig:degenerate_reasoning}, this makes total sense: the generation starts making sense, but at some point during the reasoning it degenerates. this is because the further you are from the prompt, the more different the kv cache is to the original one and the errors accumulate. this clearly justifies the need for a constant kv cache method that supports long reasoning traces, which is precisely the point of this paper. 

\begin{table}[H]
\caption{Performance of untrained KV-cache sharing strategies for Ouro-1.4B-Thinking. We compare first-loop and last-loop KV-cache sharing, with and without preserving the original prompt KV cache.}
\label{tab:untrained_kv_sharing_ablation}
\centering
\vspace{0.9\baselineskip}
\small
\hspace*{-8pt}
  \begin{tabular}{l cc cc cc c}
\toprule
&
    \multicolumn{2}{c}{\textbf{AIME24}} &
    \multicolumn{2}{c}{\textbf{AIME25}} &
    \multicolumn{2}{c}{\textbf{AMC23}} &
    \multirow{2}{*}{\textbf{MATH-500}} \\
    \cmidrule(lr){2-3}\cmidrule(lr){4-5}\cmidrule(lr){6-7}
 & pass@1 & pass@10
    & pass@1 & pass@10
    & pass@1 & pass@10
    &  \\
\midrule
Ouro-1.4B-Thinking & \textbf{50.2} & \textbf{81.5} & \textbf{36.7} & \textbf{69.0} & \textbf{81.2} & \textbf{96.6} & \textbf{94.4}  \\
Last loop (w/ prefill)  & 0.0 & 0.0 & 0.0 & 0.0 & 0.0 & 0.0 & 0.0 \\
Last loop  & 0.0 & 0.0 & 0.0 & 0.0 & 0.0 & 0.0 & 0.0 \\
First loop (w/ prefill)  & 0.0 & 0.0 & 0.0 & 0.0 & 0.0 & 0.0 & 0.0 \\
First loop  & 0.0 & 0.0 & 0.0 & 0.0 & 0.0 & 0.0 & 0.0 \\
\midrule
\end{tabular}
\end{table}

\begin{figure}[H]
\centering
\begingroup
\setlength{\fboxsep}{8pt}

\noindent\colorbox{gray!15}{%
  \parbox{\dimexpr\linewidth-2\fboxsep\relax}{%
    \scriptsize
    <think>Okay, so I need to simplify tan 100 degrees plus 4 times sin 100 degrees. Hmm, let me think about how to approach this. I remember some trigonometric identities, maybe I can use them here. \\

    First, let me write down the expression: tan(100°) + 4 sin(100°). Hmm, 100 degrees is in the second quadrant, so tan is negative there because tangent is sine over cosine, and cosine is negative in the second quadrant. Sin is positive. But maybe there's a way to combine these terms. \\

    Alternatively, maybe I can use some angle addition formulas or convert tangent into sine and cosine? Let me see. Since tangent is sine over cosine, so tan(100°) = sin(100°)/cos(100°). So then the expression becomes [sin(100°)/cos(100°)] +4 sin(100°). \\ \\

    Hmm, so if I get a common denominator, maybe I can combine them? Let's see. The common denominator would be cos(100°). So that would be [sin(100°) +4 sin(100°) cos(100°)] divided by cos(100°). Wait, but that might not help much. Hmm. \\

    Alternatively, maybe there is a clever way to write this as some sine or cosine function with some angle? Let me think. \\

    Wait, 100 degrees is 80 degrees plus 20 degrees. 80 is 60 plus 20, but not sure. Maybe I can use some addition formulas?\\

    Alternatively, perhaps there is a way to express 4 sin(100°) as some sine of another angle. Maybe with some multiple angle formulas?\\

    Alternatively, maybe I can write 100° as 60° +40°, because h, 60+40 is h 100. Let me see. Let me think.\\

    Wait, maybe I can use the identity for tan(A+B), but not sure. Alternatively, maybe I can write 100° as h 80°+20°, but maybe maybe I can't think of a direct identity.\\

    Alternatively, maybe I can use the identity that tan(x) +4 sin(x) = something. Maybe there is a formula for that?\\

    Wait, let me try tohe following:\\

    Let me consider the expressionation: tan(100° +4 sin(100°). Maybe I can writet it as [tan(100°) +4 sin(100°)]. to see if I can findd some angle such that this expressionation is equal to some sine or cosine functiontian ofthat angle.\\
    
    Hmm, maybe maybe maybe maybe maybe I know maybe that maybe the expressionation can be simplified using the identity:\\

    tan(A) +4 sin(A)=sqrt(5) something.\\

    Wait, I think I heard of a formula that tan(( +4 sin()=sqrt(5) or something like like like. Let me check that\\

    Let me try to see this. Let me think that if A=1 , then the expressionation would is tan()+=4 sin(), but probably that the value of thetan 100° plus h teh 4 sin100° is approximately equal to teh sqrt(5).\\ \\

    h, let me seech that. \\

    h,\\

    Hmmm, let me seeh theangleng in that.\\

    h, maybe,\\

    ,\\\\

    h,\\ \\

    Wait, \\
    h, maybe,\\
    h,
    h,

    the expressionason tan(h  +4 sin(h')= teh sqrt(5). \\

    Maybe, (...)\\
  }
}
\endgroup
\caption{Example reasoning trace in Ouro-1.4B-Thinking illustrating the failure mode of last-loop KV-cache sharing with prefill.}
\label{fig:degenerate_reasoning}
\end{figure}
\section{Attention Alignment Loss}

In this section, we provide \autoref{fig:align_loss}, to support the explanation of the Attention Alignment Loss provided in~\autoref{sec:training_details}.

\begin{figure}[H]
    \centering
    \includegraphics[width=\linewidth]{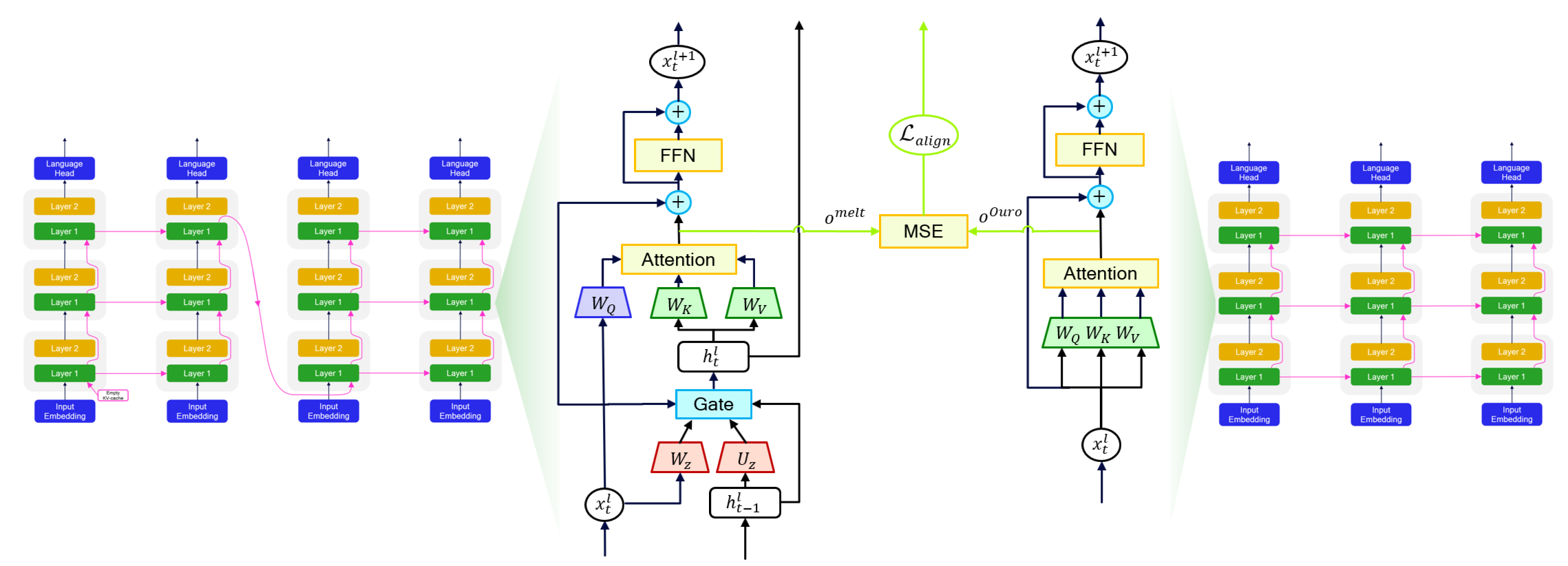}
    \caption{The \textbf{auxiliary alignment loss} matches MELT attention outputs to the corresponding outputs of the frozen LoopLM teacher at each layer and reasoning loop..}
    \label{fig:align_loss}
\end{figure}

\section{Hyperparameters} \label{app:hyperparams}

This section provides the hyperparameters required to reproduce our training and evaluation runs. Table~\ref{tab:hyperparameters} reports the hyperparameters used for MELT-1.6B and the ablation studies, while Table~\ref{tab:hyperparameters_benchmarks} summarizes the benchmark evaluation settings. Note that for HLE, we evaluate on a 300-sample subset of the original dataset.

\begin{table}[!ht]
  \centering
\caption{Training hyperparameters for \textbf{MELT-1.6B}.}
\label{tab:hyperparameters}
  
  \begin{tabular}{l l}
\toprule
\textbf{Parameter} & \textbf{Value} \\
\midrule
Dataset-mix & 50\% AceReason-1.1-SFT, 50\% OpenThoughts3 \\
\# layers & 24 \\
Hidden dimension & 2048 \\
Recurrent steps & 4 \\
\# gating params. & $24 \times 2048^2 \times 2 \approx 0.2$B  \\
% Gating params. init. & $\sim \mathcal{N}_{[-0.04,\,0.04]}(0,\;0.02^2)$ \\
Original params. init. & Ouro-1.4B-Thinking \\
Chunk size (tokens) & 500 \\
Batch size (tokens) & 320K \\
Seq. length (tokens) & 10K \\
Gradient norm clipping & $1.0$ \\
LR scheduler & Cosine decay with warmup \\
Optimizer & Adam ($\beta_1{=}0.9$, $\beta_2{=}0.95$) \\
Weight decay & $1.0\times10^{-4}$ \\
Learning rate & $8\times10^{-6}$ \\
Gate learning rate & $5.0\times10^{-4}$ \\
\midrule
\textbf{Phase 1} & \\
\midrule
Interpolation training steps & 500 \\
Training tokens & 160M \\
\midrule
\textbf{Phase 2} & \\
\midrule
Training steps & 300 \\
Training tokens & 96M \\
Attention Aligned Loss $\beta$ & 0.1 \\
\bottomrule
\end{tabular}
\end{table}

\begin{table}[!ht]
  \centering
\caption{Benchmarks information}
\label{tab:hyperparameters_benchmarks}
  
  \begin{tabular}{llll}
\toprule
\textbf{Benchmark} & \textbf{Number of samples} & \textbf{Reported metrics} & \textbf{Number of completions} \\
\midrule

    AIME24
    & 30
    & pass@1, pass@10
    & 16 \\

    AIME25
    & 30
    & pass@1, pass@10
    & 16 \\

    AIME26
    & 30
    & pass@1, pass@10
    & 16 \\

    AMC23
    & 40
    & pass@1, pass@10
    & 16 \\

    MATH-500
    & 500
    & accuracy
    & 1 \\

    OlympiadBench
    & 581
    & accuracy
    & 1 \\
    
    GPQA
    & 448
    & accuracy
    & 1 \\

    HLE
    & 300
    & accuracy
    & 1 \\

    MMLU-Red
    & 5700
    & accuracy
    & 1 \\

    Humaneval
    & 164
    & accuracy
    & 1 \\

\bottomrule
\end{tabular}
\end{table}

Regarding compute, the main training run for MELT-1.6B required 130 hours on a node with 8 H100 GPUs (80GB), corresponding to 1,040 GPU-hours. The three ablation runs each used only the first training phase and required 60 hours on the same 8-GPU node, for a total of 1,440 GPU-hours. Evaluation required approximately 500 GPU-hours, while preliminary experiments accounted for roughly 15,000 GPU-hours. Overall, the project used approximately 20,000 GPU-hours.
\section{Theoretical analysis}\label{app:theory}

As described in~\autoref{sec:method}, MELT updates only the last row of the state matrix. Therefore, we specialize our proofs to take that aspect explicitly into account. Notwithstanding, our results are fully generalizable to the case of the full matrix update. 

\subsection{Spectral stability of the gated update}

The next proposition shows that, in the saturated-gate regime, gradients are preserved across loops, providing the foundation for the Gradient Superhighway described in \autoref{sec:arch}.

\begin{proposition}[Spectral Regulation in the Saturated Regime]
\label{prop:spectral_regulation}
Let $H_t^{(i)} \in \mathbb{R}^{L\times D}$ be the latent KV state at iteration $t$ for layer $i$, where $L$ is the sequence length and $D$ is the feature size. Let us consider $h_t$ as the $D\times 1$ state vector corresponding to the current token in the sequence (we omit the layer index for ease of notation). Consider MELT's element-wise update rule: 
\begin{equation}\label{eqn:had_prod}
    h_t = z_t \odot h_{t-1} + (\mathbf{1} - z_t) \odot x_t
\end{equation}
where $x_t \in \mathbb{R}^{D\times 1}$ is the hidden state at the current layer given input token $x$, and $z_t = \sigma(W_z x_t + U_z h_{t-1} + b_z) \in \mathbb{R}^{D\times 1}$ is the gate vector. If the gating mechanism saturates such that $z_t \to \mathbf{1}$ (element-wise) for a set of latent dimensions, then the Jacobian $J_t = \frac{\partial h_t}{\partial h_{t-1}}$, restricted to these dimensions, converges to the identity matrix. 
Consequently, the spectral radius $\rho(J_t) \to 1$. Thus, ensuring that the gradient magnitude is preserved over any arbitrary number of iterations.
\end{proposition}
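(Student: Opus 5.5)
I will compute the Jacobian $J_t = \partial h_t/\partial h_{t-1}$ directly from the update rule \eqref{eqn:had_prod}, treating $x_t$ as fixed with respect to $h_{t-1}$. (Inside the loop $x_t$ does depend on earlier states through attention; I will handle that by restricting to the direct recurrent path, or by assuming that pathway is bounded and dominated.)

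Write $a_t = W_z x_t + U_z h_{t-1} + b_z$, so that $z_t = \sigma(a_t)$. By the product rule,
\[
J_t = \mathrm{diag}(z_t) + \mathrm{diag}(h_{t-1} - x_t)\,\mathrm{diag}\bigl(\sigma'(a_t)\bigr)\,U_z .
\]
Here $\sigma'(a) = \sigma(a)(1-\sigma(a))$. This gives two terms:
\begin{itemize}
\item a diagonal \emph{carry} term $\mathrm{diag}(z_t)$;
\item a \emph{gate-sensitivity} term, scaled row-wise by $z_t \odot (\mathbf{1}-z_t) \odot (h_{t-1}-x_t)$.
\end{itemize}

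Next, let $S$ be the set of saturating dimensions, where $z_{t,j} \to 1$. Restricting $J_t$ to $S$ means taking the submatrix $[J_t]_{S,S}$, or the rows indexed by $S$ if we want to be stricter.

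\begin{itemize}
\item The diagonal part $\mathrm{diag}(z_t)_{S,S}$ tends to $I_{|S|}$.
\item For each $j \in S$, the $j$-th row of the gate-sensitivity term is bounded in norm by $z_{t,j}(1-z_{t,j})\,|h_{t-1,j}-x_{t,j}|\,\|U_z\|$. This tends to $0$, provided $h_{t-1}$, $x_t$ and $U_z$ stay bounded.
\end{itemize}

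Boundedness can be argued from the update itself. Since $h_t$ is a convex combination of $h_{t-1}$ and $x_t$, we get $\|h_t\|_\infty \le \max(\|h_{t-1}\|_\infty, \|x_t\|_\infty)$. So $h$ stays bounded whenever the hidden states $x_t$ are bounded, which normalized transformer activations give us.

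Therefore $[J_t]_{S,S} \to I$ entrywise. Because eigenvalues depend continuously on matrix entries, its spectral radius (indeed every eigenvalue) tends to $1$, and $\|[J_t]_{S,S}\|_2 \to 1$ as well.

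To reach the gradient-preservation conclusion, I will write the backpropagated gradient as $\partial \mathcal{L}/\partial h_{t_0} = \bigl(\prod_{t>t_0} J_t^\top\bigr)\,\partial \mathcal{L}/\partial h_T$. On $S$, each factor equals $I + E_t$ with $\|E_t\|\le \varepsilon_t$, so the product is controlled by
\[
\prod_t (1-\varepsilon_t) \;\le\; \|\cdot\| \;\le\; \prod_t (1+\varepsilon_t).
\]
In the exact limit the product is the identity for any number of iterations, which is the claimed preservation. For finite but near-saturated gates, it suffices that $\sum_t \varepsilon_t$ is small.

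I expect the main obstacle to be making ``restricted to these dimensions'' rigorous. The off-diagonal block $[J_t]_{S,S^c}$ need not vanish: it carries $z_t(1-z_t)$ factors on rows in $S$, so it does vanish, but $[J_t]_{S^c,S}$ does not. The full-matrix spectral radius is therefore not simply $1$. I will resolve this by noting that the rows for $S$ become $[I\;\;0]$. Hence $J_t$ is asymptotically block lower-triangular, and the eigenvalues of the $S$-block (all equal to $1$) appear among those of $J_t$. The statement should then be read as the $S$-block having spectral radius $1$, with the gradient components flowing through $S$ passed unchanged. I will also state explicitly the boundedness assumption on $x_t$ and the neglect of the indirect dependence of $x_t$ on $h_{t-1}$.
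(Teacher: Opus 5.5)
Your argument is essentially the paper's: the product-rule Jacobian, the carry term $\mathrm{diag}(z_t)\to\mathbf{I}$, and the gate-sensitivity term killed by $\sigma'(a_t)=z_t\odot(\mathbf{1}-z_t)\to 0$ on the saturated coordinates. Your row-wise reading of ``restricted to these dimensions'' is more careful than the paper, which simply asserts $J_t\to\mathbf{I}$ for the whole matrix. You show that the rows in $S$ tend to $[\,\mathbf{I}\;\;0\,]$, so $J_t$ is asymptotically block lower-triangular and $\rho\to 1$ is really a statement about the $S$-block.

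The one place you are weaker than necessary is the indirect path. Rather than neglecting $\partial x_t/\partial h_{t-1}$, keep it as the paper does. That means its Term~3, $\mathrm{diag}(\mathbf{1}-z_t)\,\partial x_t/\partial h_{t-1}$, together with the $W_z\,\partial x_t/\partial h_{t-1}$ contribution inside $\partial z_t/\partial h_{t-1}$. On every row $j\in S$ both pieces carry a factor $(1-z_{t,j})\to 0$. So they vanish as soon as $\partial x_t/\partial h_{t-1}$ is bounded, which is the assumption you already float, and there is no need to restrict to the ``direct'' recurrent path.
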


\begin{proof}
The Jacobian of the update rule with respect to $h_{t-1}$ is derived via the product rule applied to the Hadamard product in~\autoref{eqn:had_prod}:
\begin{equation}
    J_t = \underbrace{\text{diag}(z_t)}_{\text{Term 1}} + \underbrace{\text{diag}(h_{t-1} - x_t) \frac{\partial z_t}{\partial h_{t-1}}}_{\text{Term 2}} + \underbrace{\text{diag}(\mathbf{1} - z_t) \frac{\partial x_t}{\partial h_{t-1}}}_{\text{Term 3}}
\end{equation}
We analyze the limit behavior in the shielding regime where $z_t \to \mathbf{1}$:
\begin{enumerate}
    \item \textbf{Term 1:} Approaches the identity matrix: $\mathrm{lim}_{z \to \mathbf{1}} \text{diag}(z_t) = \mathbf{I}$.
    \item \textbf{Term 2:} The derivative of the sigmoid function $\sigma'(u) = \sigma(u)(1-\sigma(u))$ vanishes as $z_t \to \mathbf{1}$. Thus, $\frac{\partial z_t}{\partial h_{t-1}} \to \mathbf{0}$.
    \item \textbf{Term 3:} The term $(\mathbf{1} - z_t)$ approaches $\mathbf{0}$, nullifying the contribution of the recurrent weight matrix in $\frac{\partial \tilde{h}_t}{\partial h_{t-1}}$.
\end{enumerate}
Consequently: $\mathrm{lim}_{z \to 1}J_t=\mathbf{I}+\mathbf{0}+\mathbf{0} \implies J_t \approx \mathbf{I}$. 
Since the eigenvalues of the identity matrix are all $1$, the spectral radius is $\rho(J_t) = 1$.
\end{proof}

Proposition~\ref{prop:spectral_regulation} gives more insights into the role of the gate $z_t$. Rather than simply selecting information, it acts as a structural stabilizer for the learning process. By explicitly controlling the decay rate of the hidden state, $z_t$ maintains the spectral radius of the recurrence dynamics near unity. This allows gradients to propagate through long sequences without vanishing, while the strict boundedness of the gate prevents the instability associated with exploding gradients.

\begin{remark}[Relaxation to Continuous Regime]
\label{rem:continuous_regime}
In practice, the gate $z_t$ is modeled via a sigmoid function and takes values in $[0, 1]$. 
While it cannot reach exactly $\mathbf{1}$, the network can learn weights such that $z_t \ge \mathbf{1} - \epsilon$ for an arbitrarily small $\epsilon > 0$. 
In this regime, the gradient magnitude at step $T$ scales as $(1 - \epsilon)^T$. 
Provided that $\epsilon \ll 1/T$, the signal degradation is negligible over the task's reasoning horizon. 
Thus, the continuous gate achieves \textit{Effective Spectral Regulation}, approximating the ideal stability derived in Proposition \ref{prop:spectral_regulation}.
\end{remark}

Our architecture establishes a retrieval hierarchy that decouples positional addressing from feature extraction. Because the state matrix $H$ maintains the history of processed tokens as discrete columns $\{h_1, \dots, h_L\}$, the model effectively possesses random access to the sequence axis. The attention mechanism first employs the Value projection $W_V$ to address the Disentanglement problem: since each token vector encodes multiple attributes in linear superposition, $W_V$ acts as a spectral filter, to isolate the specific feature subspace required for the current computation, ensuring that only the relevant signal is propagated while orthogonal interference is suppressed.

% Our architecture establishes a retrieval hierarchy that decouples positional addressing from feature extraction. Because the state matrix $H$ maintains the history of processed tokens as discrete columns $\{h_1, \dots, h_L\}$, the model effectively possesses random access to the sequence axis. The attention mechanism first employs the Key projection $W_K$ to resolve the Addressing problem, generating query-key scores that direct focus to the specific token position containing relevant context. Once a target column $h_n$ is selected, the Value projection $W_V$ addresses the Disentanglement problem: since each token vector encodes multiple attributes (syntax, semantics, position) in linear superposition, $W_V$ acts as a spectral filter—as detailed in Proposition~\ref{prop:disentanglement_matrix}—to isolate the specific feature subspace required for the current computation, ensuring that only the relevant signal is propagated while orthogonal interference is suppressed.

\subsection{Gradient Superhighway} \label{app:gradient_superhighway}

Stable gradient flow is essential for effectively optimizing the early loop iterations, so our architecture is designed to avoid both vanishing and exploding gradients.
To analyze this behavior, we examine how the loss $\mathcal{L}$ backpropagates to the initial state $h^{(l)}_{0}$ across the $T$ recurrent updates. By the chain rule, the gradient decomposes into a product of Jacobians:
\[
    \frac{\partial \mathcal{L}}{\partial h^{(l)}_0} = \frac{\partial \mathcal{L}}{\partial h^{(l)}_T} \prod_{t=1}^{T} \frac{\partial h^{(l)}_t}{\partial h^{(l)}_{t-1}} = \frac{\partial \mathcal{L}}{\partial h^{(l)}_T} \prod_{t=1}^{T} J_t
\]
Leveraging Proposition \ref{prop:spectral_regulation}, we observe that for latent dimensions in the shielding regime ($Z_t \approx \mathbf{1}$), the local Jacobian effectively acts as the identity operator ($J_t \approx \mathbf{1}$). This simplifies the product significantly:
\[
    \prod_{t=1}^{T} J_t \approx \prod_{t=1}^{T} \mathbf{1} = \mathbf{1}
\]
% (\autoref{fig:grad_flow})
Such a behavior establishes a \textit{Gradient Superhighway}: a direct path that allows error signals to traverse arbitrary depths. In contrast to standard recurrent dynamics, where gradient norms typically scale as $\mathcal{O}(\lambda^T)$—inevitably leading to exponential decay for spectral radii $|\lambda| < 1$—our architecture ensures that $\|\frac{\partial \mathcal{L}}{\partial h^{(l)}_0}\| \approx \|\frac{\partial \mathcal{L}}{\partial h^{(l)}_T}\|$. This structural stability effectively alleviates the vanishing gradient problem, enabling the optimization of deeper looped transformer models.

\section{Notes on reproducibility and inference efficiency in Ouro}
\label{app:ouro_discussion}

This appendix documents several technical observations we made while attempting to reproduce and analyze claims from~\citep{zhu2025scalinglatentreasoninglooped}. Our goal is not to diminish the contributions of Ouro, but to clarify practical aspects that are directly relevant when comparing memory usage and inference efficiency against MELT.

\subsection{Reproducibility}

Despite substantial effort, we were unable to fully reproduce the reported results of Ouro under the configurations described in the paper. While the authors provide the model’s code and pretrained checkpoints, key implementation details and evaluation settings are either underspecified or differ from what is required to match the reported numbers. As a result, we observed non-trivial discrepancies between the performance reported in the paper and the results obtained using the released artifacts. Therefore, throughout this work we rely exclusively on values obtained from our own experimental measurements. Notably, even under these measurements, Ouro remains competitive and continues to outperform the state of the art, underscoring the strength of its underlying approach.

\subsection{Early-exit gating and effective compute}

A central component of Ouro is its learned gating mechanism, which is intended to enable adaptive computation by allowing tokens to exit early when additional recurrent steps are deemed unnecessary. While this mechanism is emphasized throughout the paper, we observed the following in practice:

\begin{itemize}
    \item Ouro introduces an early-exit mechanism, but the released default configuration uses a threshold that effectively disables early exiting, and the paper does not specify how this threshold should be chosen in practice.
    \item Even when an early exit is triggered, the model still executes all recurrent loop computations up to the maximum depth; the gating affects which logits are selected, not whether subsequent loops are computed.
\end{itemize}

We verified this behavior by inspecting the released inference code. As a consequence, the gating mechanism does not reduce inference-time compute or memory usage under typical settings, despite its conceptual framing as an adaptive compute mechanism.

We hypothesize that this design choice is driven by KV-cache dependencies: since later tokens require access to the KV states produced at the final loop, it is not possible to terminate computation early for a given token without breaking autoregressive consistency. It is worth noting that this limitation would not apply to MELT’s constant-memory KV update mechanism, although we leave a full investigation of early-exit strategies in MELT to future work.

% \subsection{KV cache sharing claims}

% In Section~5.4.2 of the Ouro paper (“KV Cache Sharing for Inference Efficiency”), the authors report up to a $4\times$ reduction in KV-cache memory usage. Although the paper notes that this KV-sharing strategy applies only to the generated tokens, while prompt tokens still use an unshared KV cache, the result requires careful interpretation. Since the optimization affects only part of the KV-cache, the effective end-to-end memory reduction in practical inference settings is substantially smaller than the stated $4\times$.

\section{Existing assets}
\label{app:assets}%% Tables listing datasets and models used in experiments.
%% Requires: \usepackage{booktabs}, \usepackage{hyperref}

In this appendix, we provide a comprehensive overview of all assets used throughout this work, along with their corresponding licenses to ensure transparency and reproducibility. Specifically, we list the evaluation benchmarks (Table~\ref{tab:datasets}), the models considered in our comparisons (Table~\ref{tab:models}), the training datasets employed (Table~\ref{tab:training_datasets}), and the main codebases used in our implementation (Table~\ref{tab:codebases}). All resources are referenced with links to their original sources.

% ----------------------------------------------------------------
% TABLE 1 — Datasets
% ----------------------------------------------------------------
\begin{table}[!ht]
  \centering
  \caption{The list of benchmarks.}
  \label{tab:datasets}
  \begin{tabular}{lcc}
    \toprule
    \textbf{Benchmarks} & \textbf{Link} & \textbf{License} \\
    \midrule
    MATH500~\citep{lightman2023lets}
& \href{https://huggingface.co/datasets/math-ai/math500}{\texttt{HuggingFace}}
& MIT \\[2pt]
    AIME 2024~\citep{aime2024}
& \href{https://huggingface.co/datasets/HuggingFaceH4/aime_2024}{\texttt{HuggingFace}}
& Copyright MAA \\[2pt]
    AIME 2025~\citep{aime2025}
& \href{https://huggingface.co/datasets/yentinglin/aime_2025}{\texttt{HuggingFace}}
& Copyright MAA \\[2pt]
    
AIME 2026~\citep{aime2026}
& \href{https://huggingface.co/datasets/math-ai/aime26}{\texttt{HuggingFace}}
& Apache 2.0 \\[2pt]
    OlympiadBench~\citep{he2024olympiadbench}
& \href{https://huggingface.co/datasets/math-ai/olympiadbench}{\texttt{HuggingFace}}
& MIT \\[2pt]
    AMC 2023~\citep{amc2023}
& \href{https://huggingface.co/datasets/math-ai/amc23}{\texttt{HuggingFace}}
& Copyright MAA \\[2pt]
    GPQA~\citep{rein2023gpqagraduatelevelgoogleproofqa}
& \href{https://huggingface.co/datasets/Idavidrein/gpqa}{\texttt{HuggingFace}}
& CC BY 4.0 \\
HLE~\citep{hle} & 
\href{https://huggingface.co/datasets/cais/hle}{\texttt{HuggingFace}} & MIT \\
MMLU~\citep{hendrycks2021measuringmassivemultitasklanguage} & \href{https://huggingface.co/datasets/cais/mmlu}{\texttt{HuggingFace}} & MIT \\
HumanEval~\citep{chen2021evaluatinglargelanguagemodels} & \href{https://huggingface.co/datasets/openai/openai_humaneval}{\texttt{HuggingFace}} & MIT \\
    \bottomrule
  \end{tabular}
\end{table}
 
% ----------------------------------------------------------------
% TABLE 2 — Models
% ----------------------------------------------------------------
\begin{table}[!ht]
  \centering
  \caption{The list of models.}
  \label{tab:models}
  %\small
  \begin{tabular}{lcc}
    \toprule
    \textbf{Model} & \textbf{Link} & \textbf{License} \\
    \midrule
    Ouro-Thinking-1.4B~\citep{zhu2025scalinglatentreasoninglooped}
& \href{https://huggingface.co/ByteDance/Ouro-1.4B-Thinking}{\texttt{HuggingFace}}
& Apache 2.0 \\[2pt]
    Ouro-Thinking-2.6B~\citep{zhu2025scalinglatentreasoninglooped}
& \href{https://huggingface.co/ByteDance/Ouro-2.6B-Thinking}{\texttt{HuggingFace}}
&  Apache 2.0 \\[2pt]
    Qwen3-1.7B~\citep{yang2025qwen3technicalreport}
& \href{https://huggingface.co/Qwen/Qwen3-1.7B}{\texttt{HuggingFace}}
& Apache 2.0 \\[2pt]
    Gemma4-E2B~\citep{gemma}
& \href{https://huggingface.co/google/gemma-4-E2B}{\texttt{HuggingFace}}
& Apache 2.0 \\[2pt]
    Qwen3.5-2B~\citep{qwen35blog}
& \href{https://huggingface.co/Qwen/Qwen3.5-2B}{\texttt{HuggingFace}}
& Apache 2.0 \\[2pt]
    DeepSeek-R1-1.5B~\citep{Guo_2025}
& \href{https://huggingface.co/deepseek-ai/DeepSeek-R1-Distill-Qwen-1.5B}{\texttt{HuggingFace}}
& MIT \\[2pt]
    \bottomrule
  \end{tabular}
\end{table}

\begin{table}[!ht]
  \centering
  \caption{The list of training datasets.}
  \label{tab:training_datasets}
  \begin{tabular}{lcc}
    \toprule
    \textbf{Dataset} & \textbf{Link} & \textbf{License} \\
    \midrule
    AceReason-1.1-SFT~\citep{liu_acereason-nemotron_2025}
& \href{https://huggingface.co/datasets/nvidia/AceReason-1.1-SFT}{\texttt{HuggingFace}}
& cc-by-4.0 \\[2pt]

    OpenThoughts3~\citep{guha2025openthoughtsdatarecipesreasoning}
& \href{https://huggingface.co/datasets/open-thoughts/OpenThoughts3-1.2M}{\texttt{HuggingFace}}
& Apache 2.0 \\[2pt]

%     OpenCodeReasoning~\citep{ahmad2025opencodereasoning}
% & \href{https://huggingface.co/datasets/nvidia/OpenCodeReasoning}{\texttt{HuggingFace}}
% & cc-by-4.0 \\[2pt]

%     Llama-Nemotron-PostTraining-Dataset~\citep{bercovich2025llamanemotronefficientreasoningmodels}
% & \href{https://huggingface.co/datasets/nvidia/Llama-Nemotron-Post-Training-Dataset}{\texttt{HuggingFace}}
% & cc-by-4.0 \\[2pt]

%     OO1-Chat-747K~\citep{oo1chat747k}
% & \href{https://huggingface.co/datasets/m-a-p/OO1-Chat-747K}{\texttt{HuggingFace}} \href{https://modelscope.cn/datasets/m-a-p/OO1-Chat-747K}{\texttt{ModelScope}}
% & Apache 2.0 \\
    \bottomrule
  \end{tabular}
\end{table}

\begin{table}[!ht]
  \centering
  \caption{The main codebases used.}
  \label{tab:codebases}
  \begin{tabular}{lcc}
    \toprule
    \textbf{Codebase} & \textbf{Link} & \textbf{License} \\
    \midrule
    Transformers~\citep{wolf-etal-2020-transformers}
& \href{https://github.com/huggingface/transformers}{\texttt{GitHub}}
& Apache 2.0 \\[2pt]

    TRL~\citep{vonwerra2020trl}
& \href{https://github.com/huggingface/trl}{\texttt{GitHub}}
& Apache 2.0 \\[2pt]

    vLLM~\citep{vllm}
& \href{https://github.com/vllm-project/vllm}{\texttt{GitHub}}
& Apache 2.0 \\[2pt]

    LightEval~\citep{lighteval}
& \href{https://github.com/huggingface/lighteval}{\texttt{GitHub}}
& MIT \\
    \bottomrule
  \end{tabular}
\end{table}

% \newpage
% \input{checklist.tex}

% \newpage

\end{document}